\newdimen\arrowsize
\newcommand\Jonas[1]{{\color{red}Jonas: #1}}
\def\conferenceversion{1}
\def\journalversion{2}
\def\version{\journalversion}        
\def\iflong{\ifnum\version=\journalversion}
\def\ificml{\ifnum\version=\conferenceversion}
\newtheorem{theorem}{Theorem}
\newtheorem{definition}{Definition}
\newtheorem{proposition}[theorem]{Proposition}
\newtheorem{lemma}[theorem]{Lemma}
\newenvironment{proof}[1][. ]{{\bf Proof#1}}{\hfill$\square$\vskip\baselineskip}
\newcommand{\commentout}[1]{}
\newcommand{\Var}{\mathrm{Var}}
\newcommand{\RR}{\mathbb{R}}    
\newcommand{\NN}{\mathbb{N}}    
\newcommand{\independent}{\perp\mkern-11mu\perp}
  \mathchardef\ordinarycolon\mathcode`\:
\icmltitlerunning{Removing systematic errors for exoplanet search via latent causes}
\begin{document}

\renewcommand{\topfraction}{0.9}	
    \renewcommand{\bottomfraction}{0.9}	


\twocolumn[
\icmltitle{Removing systematic errors for exoplanet search via latent causes}

\icmlauthor{Bernhard Sch\"olkopf}{bs@tuebingen.mpg.de}
\icmladdress{Max Planck Institute for Intelligent Systems,
            72076 T\"ubingen, GERMANY}
\icmlauthor{David W. Hogg}{david.hogg@nyu.edu}
\icmlauthor{Dun Wang}{dw1519@nyu.edu}
\icmlauthor{Daniel Foreman-Mackey}{foreman.mackey@gmail.com}
\icmladdress{Center for Cosmology and Particle Physics, New York University, New York, NY 10003, USA}
\icmlauthor{Dominik Janzing}{dominik.janzing@tuebingen.mpg.de}
\icmlauthor{Carl-Johann Simon-Gabriel}{carl-johann.simon-gabriel@tuebingen.mpg.de}
\icmlauthor{Jonas Peters}{jonas.peters@tuebingen.mpg.de}
\icmladdress{Max Planck Institute for Intelligent Systems,
            72076 T\"ubingen, GERMANY}

\icmlkeywords{machine learning, ICML, causal inference, confounder, systematics, systematic error, exoplanet, astronomy, regression}

\vskip 0.3in
]

\begin{abstract}
  We describe a method for removing the effect of confounders in order
  to reconstruct a latent quantity of interest.  The method, referred to as {\em half-sibling regression}, is
  inspired by recent work in causal inference using additive noise
  models. We provide a theoretical justification and illustrate the
  potential of the method in a challenging astronomy application.
\end{abstract}

\section{Introduction}

The present paper proposes and analyzes a method for removing the effect of confounding noise. The analysis is based on a hypothetical underlying causal structure. The method does not infer causal structures; rather, it is influenced by a recent thrust to try to understand how causal structures facilitate machine learning tasks \cite{ScholkopfJPSZMJ2012}. 


Causal graphical models as pioneered by \citet{Pearl00,SpiGlySch93} are joint probability distributions over a set of variables $X_1,\dots,X_n$, along with directed graphs (usually, acyclicity is assumed) with vertices $X_i$, and arrows indicating direct causal influences. By the \emph{causal Markov assumption}, each vertex $X_i$ is independent of its non-descendants, given its parents. 

There is an alternative view of causal models, which does not start from a joint distribution. Instead, it assumes a set of jointly independent noise variables, one for each vertex, and a ``structural equation'' for each variable that describes how the latter is computed by evaluating a deterministic function of its noise variable and its parents. This view, referred to as a functional causal model (or nonlinear structural equation model), leads to the same class of joint distributions over all variables \cite{Pearl00,Peters2014anm}, and we may thus choose either representation.

The functional point of view is useful in that it often makes it easier to come up with assumptions on the causal mechanisms that are at work, i.e., on the functions associated with the variables. For instance, it was recently shown \cite{HoyJanMooPetetal09} that assuming nonlinear functions with additive noise renders the two--variable case identifiable --- i.e., a case where conditional independence tests do not provide any information, and it was thus previously believed that it is impossible to infer the structure of the graph based on observational data.

In this work we start from the functional point of view and assume the underlying causal graph shown in Fig.~\ref{fig:DAG}.
Here, $N,Q,X,Y$ are jointly random variables (RVs) (i.e., RVs defined on the same underlying probability space), taking values denoted by $n,q,x,y$.
We do not require the ranges of the random variables to be $\RR$, in particular, they may be vectorial. All equalities regarding random variables should be interpreted to hold with probability one. We further (implicitly) assume the existence of conditional expectations.
\iflong

Note that while the causal motivation was helpful for our work, one can also view Fig.~\ref{fig:DAG} as a DAG (directed acyclic graph) without causal interpretation, i.e., as a directed a graphical model. We need $Q$ and $X$ (and in some cases also $N$) to be independent, which follows from the given structure no matter whether one views this as a causal graph or as a graphical model. 

In the next section, we present the method. Section~\ref{sec:exp} describes the application and provides experimental results, and Section~\ref{sec:conc} summarizes our conclusions.
\fi

\section{Half-Sibling Regression}
\begin{figure}[bt]
\begin{center}
\begin{tikzpicture}[xscale = 2, yscale=1.5, line width=0.5pt, inner sep=0.5mm, shorten >=1pt, shorten <=1pt]
  \draw (-.5,2) node(oo) [] {unobserved};
  \draw (-.5,1) node(o) [] {observed};
  \draw (1,1) node(y) [circle, draw] {$Y$};
  \draw (2.6,1) node(x) [circle, draw] {$X$};
  \draw (1.8,2) node(n) [circle, draw] {$N$};
  \draw (1,2) node(q) [circle, draw] {$Q$};
  \draw[-arcsq] (q) -- (y);
  \draw[-arcsq] (n) -- (y);
  \draw[-arcsq] (n) -- (x);
\end{tikzpicture}
\end{center}
\caption{
We are interested in reconstructing the quantity $Q$ based on the observables $X$ and $Y$ affected by noise $N$, using the knowledge that $(N,X) \independent Q$. Note that the involved quantities need not be scalars, which makes the model more general than it seems at first glance. For instance, we can think of $N$ as a multi-dimensional vector, some components of which affect only $X$, some only $Y$, and some both $X$ and $Y$. 
\label{fig:DAG}}
\end{figure}
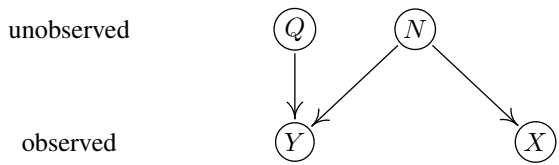

Suppose we are interested in the quantity $Q$, but unfortunately we cannot observe it directly. Instead, we observe $Y$, which we think of as a degraded version of $Q$ that is affected by noise $N$. Clearly, without knowledge of $N$, there is no way to recover $Q$. However, we assume that $N$ also affects another observable quantity (or a collection of quantities) $X$. By the graph structure, conditional on $Y$, the variables $Q$ and $X$ are dependent (in the generic case), thus $X$ contains information about $Q$. This situation is quite common if $X$ and $Y$ are measurements performed with the same apparatus, introducing the noise $N$. In the physical sciences, this is often referred to as {\em systematics}, to convey the intuition that these errors are not simply due to random fluctuations, but caused by systematic influences of the measuring device. In our application below, both types of errors occur, but we will not try to tease them apart.
\iflong
Our method addresses errors that affect both $X$ and $Y$, for instance by acting on $N$, no matter whether we call them random or systematic.
\fi

How can we use this information in practice? Unfortunately, without further restrictions, this problem is still too hard. 
Suppose that $N$ randomly switches between $\{ 1,\dots,v\}$, where $v\in\NN$ \cite{ScholkopfJPSZMJ2012}. Define the structural equation $f_Y$ for the variable $Y$ as follows: $y= f_Y(n,q) := f_n(q)$, where $f_1,\dots,f_v$ are $v$ distinct functions that compute $Y$ from $Q$ --- in other words, we randomly switch between $v$ different mechanisms. Clearly, no matter how many pairs $(x,y)$ we observe, we can choose a sufficiently large $v$ along with functions $f_1,\dots,f_v$ such that there is no way of gleaning any reliable information on $Q$ from the $f_i(Q)$ --- e.g., there may be more $f_i$ than there were data points. Things could get even worse: for instance, $N$ could be real valued, and switch between an uncountable number of functions. To prevent this kind of behavior, we need to simplify the way in which $Y$ is allowed to depend on $N$.



Before we do so, we need to point out a fundamental limitation. 
The above example shows that it can be arbitrarily hard to get information about $Q$ from finite data. However, even from infinite data, only partial information is available and certain ``gauge'' degrees of freedom remain.\iflong\footnote{This means that there are some degrees of freedom in the parametrization of the model which do not affect the observable model. } \fi
~In particular, given a reconstructed $Q$, we can always construct another one by applying an invertible transformation to it, and incorporating its inverse into the function computing $Y$ from $Q$ and $N$. This includes the possibility of adding an offset, which we will see below.

We next propose an assumption which allows for a practical method to solve the problem of reconstructing $Q$ up to the above gauge freedom. The method is surprisingly simple, and while we have not seen it in the same form elsewhere, we do not want to claim originality for it. Related tricks are occasionally applied in practice, often employing factor analysis to account for confounding effects \cite{Price06,Yu06,JohLi07,Kang08,Stegle08,GagSpe11}.
We will also present a theoretical analysis that provides insight into why and when these methods work.

\subsection{Complete Information} \label{sec:complete}
Inspired by recent work in causal inference, we use nonlinear additive noise models \cite{HoyJanMooPetetal09}. Specifically, we assume that there exists a function $f$ such that
\begin{equation}\label{eq:additive}
Y = Q + f(N).
\end{equation}
Note that we could equally well assume the more general form $Y = g(Q) + f(N)$, and the following analysis would look the same. However, in view of the above remark about the gauge freedom, this is not necessary since $Q$ can at most be identified up to a (nonlinear) reparametrization anyway. Note, moreover, that while for \citet{HoyJanMooPetetal09}, the input of $f$ is observed and we want to decide if it is a cause of $Y$, in the present setting the input of $f$ is unobserved \cite{JanPetMooSch09}, and the goal is to recover $Q$, which for \citet{HoyJanMooPetetal09} played the role of the noise.

The intuition behind our approach is as follows. Since $X\independent Q$, $X$ cannot predict $Q$, and thus neither $Q$'s influence on $Y$. It may contain information, however, about the influence of $N$ on $Y$, since $X$ is also influenced by $N$. Now suppose we try to predict $Y$ from $X$. As argued above, whatever comes from $Q$ cannot be predicted, hence only the component coming from $N$ will be picked up. Trying to predict $Y$ from $X$ is thus a vehicle to selectively capture $N$'s influence on $Y$, with the goal of subsequently removing it, to obtain an estimate of $Q$ referred to as $\hat{Q}$:
\begin{definition}\label{def:Q}
\begin{equation}\label{eq:Q}
\hat{Q}:=Y-E[Y|X]
\end{equation}
\end{definition}

For an additive model (\ref{eq:additive}), our intuition can be formalized: in this case, we can predict the additive component in $Y$ coming from $N$ --- which is exactly what we want to remove to cancel the confounding effect of $N$ and thus reconstruct $Q$ (up to an offset):
\begin{proposition}\label{prop}
Suppose $N,X$ are jointly random variables, and $f$ is a measurable function. If there exists a function $\psi$ such that 
\begin{equation}\label{eq:psi}
f(N)=\psi(X), 
\end{equation}
i.e., $f(N)$ can in principle be predicted from $X$ perfectly, then we have
\begin{equation}\label{eq:prop-a}
f(N) = E[f(N)|X].
\end{equation}
If, moreover, the additive model assumption (\ref{eq:additive}) holds, with $Q,Y$ RVs on the same underlying probability space, and $Q\independent X$, then 
\begin{equation}\label{eq:prop-b}
\hat{Q} = Q - E[Q].
\end{equation}
\end{proposition}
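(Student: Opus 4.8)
The plan is to chain together three elementary facts about conditional expectation: (i) a random variable that is a measurable function of $X$ equals its own conditional expectation given $X$; (ii) conditional expectation is linear; and (iii) the conditional expectation given $X$ of a variable independent of $X$ is the ordinary (unconditional) expectation. Everything in the Proposition follows by bookkeeping once these are in place, so there is no genuinely hard step — the content is really the observation that the perfect-predictability hypothesis \eqref{eq:psi} is exactly a measurability statement.

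For \eqref{eq:prop-a}: the hypothesis $f(N)=\psi(X)$ exhibits $f(N)$ as a measurable function of $X$, hence $\sigma(X)$-measurable; since the relevant conditional expectations are assumed to exist, the defining/characterizing property of conditional expectation gives $E[f(N)\mid X]=f(N)$ (with probability one). For \eqref{eq:prop-b}: substitute the additive model \eqref{eq:additive} into Definition~\ref{def:Q} and expand by linearity,
\begin{equation*}
\hat{Q} = Y - E[Y\mid X] = \bigl(Q+f(N)\bigr) - E[Q\mid X] - E[f(N)\mid X].
\end{equation*}
Now apply $Q\independent X$ to replace $E[Q\mid X]$ by the constant $E[Q]$, and use the already-established \eqref{eq:prop-a} to replace $E[f(N)\mid X]$ by $f(N)$; the two copies of $f(N)$ cancel, leaving $\hat{Q}=Q-E[Q]$.

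The only points needing a little care are expository rather than obstacles. First, since the ranges of $N,Q,X,Y$ need not be $\RR$ (they may be vectorial), all conditional expectations should be read componentwise, so that linearity and fact~(iii) apply coordinate by coordinate; the additive structure of \eqref{eq:additive} is compatible with this. Second, one should check that the paper's blanket assumption on the existence of conditional expectations covers each object written above — $E[Y\mid X]$, $E[Q\mid X]$, and $E[f(N)\mid X]$ — so that the linear splitting is legitimate; under that assumption this is immediate. I expect the main thing a careful reader will want spelled out is simply that \eqref{eq:prop-a} is a consequence of measurability, not of any regression or optimality argument.
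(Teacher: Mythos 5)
Your proof is correct and follows essentially the same route as the paper: \eqref{eq:prop-a} via the fact that $f(N)=\psi(X)$ is a (measurable) function of $X$ and hence equals its own conditional expectation, and \eqref{eq:prop-b} by linearity of $E[\,\cdot\,|X]$, the identity $E[Q|X]=E[Q]$ from $Q\independent X$, and cancellation of the two copies of $f(N)$. The only difference is cosmetic — the paper first computes $E[Y|X]=E[Q]+f(N)$ and then substitutes into Definition~\ref{def:Q}, whereas you expand $\hat{Q}$ directly — and your added remarks on componentwise interpretation and existence of the conditional expectations are sensible but not a change of argument.
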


In our main application below, $N$ will be systematic errors from an astronomical spacecraft and telescope, $Y$ will be a star under analysis, and $X$ will be a large set of other stars. In this case, the assumption that $f(N)=\psi(X)$ has a concrete interpretation: it means that the device can be {\em self-calibrated} based on measured science data only \cite{self-calib}.

\begin{proof}
Due to \eqref{eq:psi}, we have
\begin{equation}
E[f(N)|X] = E[\psi(X)|X] = \psi(X) = f(N).
\end{equation}

To show the second statement, consider the conditional expectation 
\begin{equation}
E[Y|X] = E[Q+f(N)|X] 
\end{equation}
Using $Q\independent X$ and \eqref{eq:prop-a}, we get
\begin{equation}
E[Y|X] =E[Q] +f(N)
= E[Q] + Y - Q.
\end{equation}
Recalling Definition~\ref{def:Q} completes the proof.
\end{proof}


Proposition~\ref{prop} 
provides us with a principled recommendation how to remove the effect of the noise and reconstruct the unobserved $Q$ up to its mean $E[Q]$: we need to subtract the conditional expectation (i.e., the regression) $E[Y|X]$ from the observed $Y$ (Definition~\ref{def:Q}). The regression $E[Y|X]$ can be estimated from observations $(x_i,y_i)$ using (linear or nonlinear) off-the-shelf methods. 
We refer to this procedure as {\em half-sibling regression} to reflect the fact that we are trying to explain aspects of the child $Y$ by regression on its half-sibling(s) $X$ in order to reconstruct properties of its unobserved parent $Q$.


Note that $m(x):=E[f(N)|X=x]$ is a function of $x$, and $E[f(N)|X]$ is the random variable $m(X)$. Correspondingly, (\ref{eq:prop-a}) is an equality of RVs. By assumption, all RVs live on the same underlying probability space. If we perform the associated random experiment, we obtain values for $X$ and $N$, and (\ref{eq:prop-a}) tells us that if we substitute them into $m$ and $f$, respectively, we get the same value with probability 1.
Eq.~(\ref{eq:prop-b}) is also an equality of RVs, and the above procedure therefore not only reconstructs some properties of the unobservable RV $Q$ --- it reconstructs, up to the mean $E[Q]$, and with probability 1, the RV itself. This may sound too good to be true --- in practice, of course its accuracy will depend on how well the assumptions of 
Proposition~\ref{prop} 
hold.

If the following conditions are met, we may expect that the procedure should work well in practice:

(i) $X$ should be (almost) independent of $Q$ --- otherwise, our method could possibly remove parts of $Q$ itself, and thus throw out the baby with the bathtub. A sufficient condition for this to be the case is that $N$ be (almost) independent of $Q$, which often makes sense in practice, e.g., if $N$ is introduced by a measuring device in a way independent of the underlying object being measured. Clearly, we can only hope to remove noise that is independent of the signal, otherwise it would be unclear what is noise and what is signal. A sufficient condition for $N\independent Q$, finally, is that the causal DAG in Fig.~\ref{fig:DAG} correctly describes the underlying causal structure.
\iflong
~\\
\fi
Note, however, that 
Proposition~\ref{prop} 
and thus our method also applies if $N\not\independent Q$, as long as $X\independent Q$. 

(ii)
The observable $X$ is chosen such that $Y$ can be predicted as well as possible from it; i.e., $X$ contains enough information about $f(N)$ and, ideally, $N$ acts on both $X$ and $Y$ in similar ways such that a ``simple'' function class suffices for solving the regression problem in practice.\\
This may sound like a rather strong requirement, but we will see that in our astronomy application, it is not unrealistic: $X$ will be a large vector of pixels of other stars, and we will use them to predict a pixel $Y$ of a star of interest. In this kind of problem, the main variability of $Y$ will often be due to the systematic effects due to the instrument $N$ also affecting other stars, and thus a large set of other stars will indeed allow a good prediction of the measured $Y$.

Note that it is not required that the underlying structural equation model be linear --- $N$ can act on $X$ and $Y$ in nonlinear ways, as an additive term $f(N)$.

In practice, we never observe $N$ directly, and thus it is hard to tell whether the assumption of perfect predictability of $f(N)$ from $X$ holds true. We now relax this assumption.

\subsection{Incomplete Information} \label{sec:incompl}


First we observe that $E[f(N)|X]$ is a good approximation for
$f(N)$ whenever $f(N)$ is almost determined by $X$:
\begin{lemma}\label{lem:condvar}
For any two jointly random variables $Z,X$, we have
\begin{equation}\label{eq:condvar}
E [(Z-E [Z|X])^2] =E [\Var [Z|X]] .
\end{equation}
\end{lemma}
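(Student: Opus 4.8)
The plan is to reduce the identity to the definition of conditional variance together with the tower property of conditional expectation; no genuine inequality or approximation is involved. Recall that for a random variable $Z$ the conditional variance given $X$ is \emph{defined} as
$$\Var[Z|X] := E\big[(Z-E[Z|X])^2 \,\big|\, X\big],$$
i.e.\ the random variable obtained by applying, pointwise in the value of $X$, the ordinary variance of the conditional distribution of $Z$ given $X$. (Equivalently one may take $\Var[Z|X]=E[Z^2|X]-(E[Z|X])^2$; the two agree by linearity of conditional expectation, expanding the square and using that $E[Z|X]$ is $X$-measurable, hence pulls out of the inner conditional expectation. If one starts from this second form, a one-line check establishes the first, so it does not matter which is taken as the definition.)

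First I would set $W := (Z-E[Z|X])^2$, a nonnegative random variable, so that by the above definition $E[W|X]=\Var[Z|X]$. Next I would apply the tower property $E\big[E[W|X]\big]=E[W]$, which yields
$$E\big[\Var[Z|X]\big] = E\big[E[W|X]\big] = E[W] = E\big[(Z-E[Z|X])^2\big],$$
which is exactly \eqref{eq:condvar}. That is the whole argument.

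The only point that requires any care is the measure-theoretic bookkeeping: one needs $Z$ (hence $W$) integrable enough for the inner conditional expectations and the outer expectation to be well-defined. This is covered by the paper's standing assumption that the relevant conditional expectations exist, and since $W\ge 0$ the tower property holds with no extra integrability hypothesis (both sides lie in $[0,\infty]$ and are equal, finite precisely when $Z$ has a finite second moment). So I do not anticipate a hard step here — the content is simply the observation that $E[(Z-E[Z|X])^2]$ is, by unwinding definitions, the expected conditional variance.
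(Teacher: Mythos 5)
Your proof is correct and follows essentially the same route as the paper's: unwind the definition of conditional variance as $E[(Z-E[Z|X])^2\,|\,X]$ and apply the tower property (law of total expectation) to the nonnegative variable $(Z-E[Z|X])^2$. The added remarks on measure-theoretic bookkeeping are fine but not needed beyond what the paper already assumes.
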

Here, $E [Z|X]$ is the random variable $g(X)$ with $g(x)=E [Z|X=x]$, and
$\Var[Z|X]$ is the random variable $h(X)$ with
$h(x)=\Var[Z|X=x]$. Then (\ref{eq:condvar}) turns into 
\begin{equation}
E[(Z-g(X))^2]=E[h(X)]\,.
\end{equation}
\begin{proof}
Note that for any random variable $Z$ we have
\[
\Var [Z|X=x]=E [(Z-E [Z|X=x])^2|X=x]\,,
\]
by the definition of variance, applied to the
variable $Z|_{X=x}$. Hence
\[
\Var [Z|X]=E [(Z-E [Z|X])^2|X]\,,
\]
where both sides are functions of $X$.
Taking the expectation w.r.t.\ $X$ on both sides yields
\[
E [\Var [Z|X]]=E [(Z-E [Z|X])^2]\,,
\]
where we have used the law of total expectation 
$
E [E [W|X]]=E [W]
$
 on the right hand side. 
\end{proof}


This leads to a stronger result for our estimator $\hat{Q}$ \eqref{eq:Q}:
\begin{proposition}\label{lem:crucial}
Let $f$ be measurable, $N,Q,X,Y$ jointly random variables with $Q\independent X$, and $Y=Q+f(N)$.
The expected squared deviation between $\hat{Q}$  and $Q-E[Q]$ satisfies
\begin{equation}\label{eq:lemma4}
E[(\hat{Q} -(Q-E[Q]))^2]= E[\Var[f(N)|X]]\,.
\end{equation}
\end{proposition}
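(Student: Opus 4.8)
The plan is to reduce the claim to Lemma~\ref{lem:condvar} by choosing $Z = f(N)$. First I would expand $\hat Q$ using its definition and the additive model: from \eqref{eq:Q} and \eqref{eq:additive},
\begin{equation}
\hat Q = Y - E[Y|X] = Q + f(N) - E[Q + f(N)\,|\,X].
\end{equation}
Now I would use $Q \independent X$, which gives $E[Q|X] = E[Q]$, and linearity of conditional expectation, to get $E[Q + f(N)\,|\,X] = E[Q] + E[f(N)\,|\,X]$. Substituting back yields
\begin{equation}
\hat Q - (Q - E[Q]) = f(N) - E[f(N)\,|\,X].
\end{equation}

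Having isolated this clean identity, the rest is immediate: take the expected square of both sides and apply Lemma~\ref{lem:condvar} with the role of $Z$ played by $f(N)$ (which is a legitimate random variable since $f$ is measurable and $N$ is a random variable), obtaining
\begin{equation}
E\big[(\hat Q - (Q - E[Q]))^2\big] = E\big[(f(N) - E[f(N)\,|\,X])^2\big] = E[\Var[f(N)\,|\,X]],
\end{equation}
which is exactly \eqref{eq:lemma4}.

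I do not anticipate a serious obstacle here; the argument is essentially bookkeeping. The one point that deserves a little care is the justification of $E[Q + f(N)\,|\,X] = E[Q|X] + E[f(N)|X]$ and of $E[Q|X] = E[Q]$ — the former needs the (implicitly assumed) existence of the relevant conditional expectations, and the latter is the standard fact that independence implies the conditional expectation collapses to the unconditional one. Both are covered by the standing assumptions of the paper. A secondary bookkeeping remark is that \eqref{eq:lemma4}, like \eqref{eq:prop-a} and \eqref{eq:prop-b}, is an identity between (expectations of) random variables on the common underlying probability space, so the manipulations above are all equalities that hold with probability one before taking the final expectation.
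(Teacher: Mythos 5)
Your proof is correct and follows essentially the same route as the paper's: expand $\hat{Q}-(Q-E[Q])$ using the additive model, collapse $E[Q|X]$ to $E[Q]$ via $Q\independent X$ to obtain $f(N)-E[f(N)|X]$, and then invoke Lemma~\ref{lem:condvar} with $Z:=f(N)$. No gaps; your remarks on the existence of conditional expectations and on the almost-sure sense of the identities are consistent with the paper's standing assumptions.
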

\begin{proof}
We rewrite the argument of the square in \eqref{eq:lemma4} as
\begin{align*}
& \hat{Q}-(Q-E[Q]) \\
=\;& Y-E[Y|X] -Q+E[Q]\\
=\;& f(N)+Q -E[f(N)|X]-E[Q|X] -Q+E[Q]\\
=\;& f(N) -E[f(N)|X].
\end{align*}
\iflong
Here, the last step uses $E[Q|X]=E[Q]$, which follows from $Q\independent X$.

\fi
The result follows using Lemma~\ref{lem:condvar} with $Z:=f(N)$.
\end{proof}

Note that 
Proposition~\ref{prop}
is a special case of 
Proposition~\ref{lem:crucial}:
if there exists a function $\psi$ such that $\psi(X)=f(N)$, then the r.h.s.\ of \eqref{eq:lemma4} vanishes.
Proposition~\ref{lem:crucial} 
drops this assumption, which is more realistic: consider the case where $X=g(N)+R$, where $R$ is another random variable. In this case, we cannot expect to reconstruct the variable $f(N)$ from $X$ exactly.

There are, however, two settings where we would still expect good approximate recovery of $Q$:

(i) If the standard deviation of $R$ goes to zero, the signal of $N$ in $X$ becomes strong and we can approximately estimate $f(N)$ from $X$, see 
Proposition~\ref{prop:decrsd}.

(ii) Alternatively, we observe many different effects of $N$. In the astronomy application below, $Q$ and $R$ are stars, from which we get noisy observations $Y$ and $X$. 
Proposition~\ref{prop:incrp} 
below shows that observing many different $X_i$ helps reconstructing $Q$, even if all $X_i$ depend on $N$ through different functions $g_i$ and their underlying (independent) signals $R_i$ do not follow the same distribution.
\iflong
The intuition is that with increasing number of variables the independent $R_i$ ``average'' out, and thus it becomes easier to reconstruct the effect of $N$. 
\fi

\begin{proposition} \label{prop:decrsd}
Assume that $Y = Q + f(N)$ and let
$$
X^s := g(N) + s \cdot R\,,
$$
where $R$, $N$ and $Q$ are jointly independent, $f \in C_b^1(\mathbb{R})$, $g \in C^1(\mathbb{R})$, $s\in\RR$, and $g$ is invertible.
Then
$$
\hat Q^s \overset{L^2}{\rightarrow} Q - E[Q] \, \quad \text{ as } \quad s \rightarrow 0\,,
$$
where
$
\hat Q^s := Y - E[Y | X^s]
$.
\end{proposition}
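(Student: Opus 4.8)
The plan is to apply Proposition~\ref{lem:crucial} to control the $L^2$ distance and then show that the resulting bound $E[\Var[f(N)\mid X^s]]$ tends to $0$ as $s\to 0$. By Proposition~\ref{lem:crucial}, for each $s$ we have $E[(\hat Q^s-(Q-E[Q]))^2]=E[\Var[f(N)\mid X^s]]$, so it suffices to prove that $E[\Var[f(N)\mid X^s]]\to 0$. First I would reduce the conditioning on $X^s=g(N)+sR$ to conditioning on something cleaner: since $g$ is invertible and $C^1$, for $s\neq 0$ the map $x\mapsto g^{-1}(x)$ is measurable, but that alone does not recover $N$ because of the additive $sR$ term. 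The point is that as $s\to 0$, $X^s\to g(N)$ in an appropriate sense, and $g(N)$ determines $N$ (invertibility), hence determines $f(N)$; so the conditional variance should vanish in the limit.

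The cleanest route is to bound $E[\Var[f(N)\mid X^s]]$ by the expected squared error of a specific, explicit predictor of $f(N)$ from $X^s$, namely the ``plug-in'' predictor $f(g^{-1}(X^s))$. Indeed, since $E[\Var[f(N)\mid X^s]] = E[(f(N)-E[f(N)\mid X^s])^2]\le E[(f(N)-h(X^s))^2]$ for any measurable $h$ (the conditional expectation is the $L^2$-projection), taking $h = f\circ g^{-1}$ gives
\begin{equation*}
E[\Var[f(N)\mid X^s]] \le E\big[(f(N) - f(g^{-1}(g(N)+sR)))^2\big].
\end{equation*}
Now I would argue this right-hand side goes to $0$. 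Since $f\in C_b^1(\mathbb{R})$, $f$ is bounded, so the integrand is dominated by $4\|f\|_\infty^2$, which is integrable; and pointwise (for each fixed realization $(N,R)$) we have $g(N)+sR\to g(N)$ as $s\to 0$, and by continuity of $g^{-1}$ and of $f$, $f(g^{-1}(g(N)+sR))\to f(g^{-1}(g(N))) = f(N)$. Hence the integrand tends to $0$ pointwise, and dominated convergence yields $E[(f(N)-f(g^{-1}(g(N)+sR)))^2]\to 0$. Combining with the Proposition~\ref{lem:crucial} identity gives $E[(\hat Q^s-(Q-E[Q]))^2]\to 0$, i.e. $\hat Q^s\overset{L^2}{\to}Q-E[Q]$.

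One technical wrinkle to address: $g^{-1}$ is defined on the range of $g$, and $X^s=g(N)+sR$ may fall outside that range, so $f\circ g^{-1}$ may not be defined on all of $\mathbb{R}$. I would handle this either by extending $g$ to a $C^1$ bijection of $\mathbb{R}$ onto an open interval (a $C^1$ invertible $g$ on $\mathbb{R}$ is strictly monotone with open interval image) and extending $f\circ g^{-1}$ arbitrarily (say by a bounded measurable extension) outside that image — the pointwise convergence argument only uses the values near $g(N)$, which lies in the open image, so for $s$ small enough along each realization the argument stays in the domain — or, more simply, by noting that for the pointwise limit it is enough that eventually (in $s$) the point $g(N)+sR$ lies in the image of $g$, which holds since the image is open and contains $g(N)$. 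The main obstacle is thus not the convergence itself but making the plug-in predictor $f\circ g^{-1}$ globally well-defined and measurable so that it is a legitimate competitor in the variational characterization of conditional expectation; once that bookkeeping is done, dominated convergence (using boundedness of $f$) finishes the argument quickly.
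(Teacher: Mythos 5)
Your proposal is correct and follows essentially the same route as the paper's proof: both compare $E[f(N)\mid X^s]$ against the explicit plug-in competitor $f\circ g^{-1}$ via the $L^2$-projection property of conditional expectation, and both use boundedness of $f$ to upgrade the resulting convergence to $L^2$ (the paper phrases the limit via convergence in probability and the continuous mapping theorem, you via pointwise convergence and dominated convergence, which is immaterial). Your explicit treatment of the domain of $g^{-1}$ and the reduction through Proposition~\ref{lem:crucial} are minor refinements of what the paper does implicitly.
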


\ificml
The proof is skipped since it is analogous to the more complicated proof of Proposition~\ref{prop:incrp}.
\fi
\iflong
\begin{proof}
We have for $s \rightarrow 0$ that
\begin{eqnarray*}
& s\cdot R \! & \! \overset{P}{\rightarrow} 0 \\
\Rightarrow \! & \!
g(N) + s\cdot R - g(N) &\overset{P}{\rightarrow} 0\\
\overset{*}{\Rightarrow}  \! & \!
g^{-1} \left( g(N) + s\cdot R \right) - N \! & \! \overset{P}{\rightarrow} 0\\
\overset{*}{\Rightarrow} \! & \!
f\left(
g^{-1} \left( g(N) + s\cdot R \right)\right) - f(N)\! & \! \overset{P}{\rightarrow} 0\\
\Rightarrow  \! & \! \psi_s (X^s) - f(N) \! & \! \overset{P}{\rightarrow} 0
\end{eqnarray*}
for some $\psi_s$ that is bounded in $s$ (the implications $*$ follow from the continuous mapping theorem).\footnote{The notation $\overset{P}{\rightarrow}$ denotes convergence in probability with respect to the measure $P$ of the underlying probability space.} This implies
$$
E[f(N) | X^s] - f(N)  \overset{L^2}{\rightarrow} 0
$$
because
$$
E[\left(f(N) - E[f(N)| X^s]\right)^2] \leq E[\left(f(N) - \psi_s(X^s) \right)^2] {\rightarrow} 0
$$
($L^2$ convergence follows because $f$ is bounded).
But then
\begin{align*}
Q - E[Q] - \hat Q^s
&= - f(N) -E[Q] + E[f(N) + Q | X^s] \\
&= E[f(N) | X^s] - f(N)  \overset{L^2}{\rightarrow} 0
\end{align*}
\end{proof}
\fi
\begin{proposition} \label{prop:incrp}
Assume that $Y = Q + f(N)$ and that $\mathbf{X}_d := (X_1, \ldots, X_d)$ satisfies
$$
X_i := g_i(N) + R_i, \quad i=1, \ldots, d,
$$
where all $R_i$, $N$ and $Q$ are jointly independent, $\sum_{i=1}^{\infty} \frac{1}{i^2} \mathrm{var}(R_i) < \infty$, $f \in C_b^1(\mathbb{R})$, $g_i \in C^1(\mathbb{R})$ for all~$i$, and $$\tilde g_d := \frac{1}{d}\sum_{j=1}^d g_j$$ is invertible with $(\tilde g_d^{-1})_d$ uniformly equicontinuous.
Then
$$
\hat Q_d \overset{L^2}{\rightarrow} Q - E[Q] \, \quad \text{ as } \quad d \rightarrow \infty\,,
$$
where we define
$
\hat Q_d := Y - E[Y | \mathbf{X}_d]
$.
\end{proposition}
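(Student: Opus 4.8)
The plan is to mimic the structure of the proof of Proposition~\ref{prop:decrsd}, replacing the ``small noise'' argument by a ``law of large numbers'' argument. The key observation is that, exactly as in Proposition~\ref{lem:crucial}, we have the identity
\[
Q - E[Q] - \hat Q_d = E[f(N)\mid \mathbf{X}_d] - f(N),
\]
so it suffices to show that $E[f(N)\mid \mathbf{X}_d] \overset{L^2}{\to} f(N)$. Since $f$ is bounded, by the same dominated-convergence-style estimate used before it is in turn enough to exhibit, for each $d$, a measurable function $\psi_d$ with $\psi_d(\mathbf{X}_d) - f(N) \overset{P}{\to} 0$ as $d\to\infty$; then $E[(f(N)-E[f(N)\mid\mathbf{X}_d])^2] \le E[(f(N)-\psi_d(\mathbf{X}_d))^2] \to 0$ by bounded convergence.

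To construct $\psi_d$, note that the natural statistic to form from $\mathbf{X}_d$ is the average $\frac{1}{d}\sum_{i=1}^d X_i = \tilde g_d(N) + \frac{1}{d}\sum_{i=1}^d R_i$. The plan is to set $\psi_d := f \circ \tilde g_d^{-1} \circ \big(\frac{1}{d}\sum_{i=1}^d X_i\big)$, which is indeed a (measurable) function of $\mathbf{X}_d$. First I would show $\frac{1}{d}\sum_{i=1}^d R_i \overset{P}{\to} 0$ (indeed a.s.): because the $R_i$ are independent and $\sum_i i^{-2}\mathrm{var}(R_i) < \infty$, Kolmogorov's strong law of large numbers for independent (not necessarily identically distributed) summands applies to $R_i - E[R_i]$ — but one must be careful that nothing in the hypotheses controls $E[R_i]$. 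I would therefore either assume/observe that centering is harmless (the $\psi_d$ construction can absorb a deterministic shift) or, more cleanly, note that only the $R_i - E[R_i]$ enter after redefining $g_i$; I will state this reduction explicitly. Granting $\frac{1}{d}\sum_i R_i \overset{P}{\to} 0$, we get $\frac{1}{d}\sum_i X_i - \tilde g_d(N) \overset{P}{\to} 0$.

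The remaining steps chain continuous maps, but here is where the extra hypotheses earn their keep, and this is the main obstacle. Unlike in Proposition~\ref{prop:decrsd}, the function being inverted, $\tilde g_d$, changes with $d$, so a single application of the continuous mapping theorem does not suffice; I need the family $(\tilde g_d^{-1})_d$ to behave uniformly. This is exactly what the uniform equicontinuity assumption provides: from $\frac{1}{d}\sum_i X_i - \tilde g_d(N) \overset{P}{\to}0$ and uniform equicontinuity of $(\tilde g_d^{-1})_d$ one deduces $\tilde g_d^{-1}\big(\frac{1}{d}\sum_i X_i\big) - \tilde g_d^{-1}(\tilde g_d(N)) = \tilde g_d^{-1}\big(\frac{1}{d}\sum_i X_i\big) - N \overset{P}{\to} 0$, uniformly in $d$ in the relevant $\varepsilon$--$\delta$ sense. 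Then, since $f$ is uniformly continuous (being in $C_b^1$), one more continuous-mapping step gives $\psi_d(\mathbf{X}_d) - f(N) = f\big(\tilde g_d^{-1}(\tfrac1d\sum_i X_i)\big) - f(N) \overset{P}{\to} 0$. Feeding this into the bounded-convergence estimate above yields $E[f(N)\mid\mathbf{X}_d]\overset{L^2}{\to} f(N)$, and combining with the opening identity completes the proof. I expect the only genuinely delicate point to be making the ``$\varepsilon$--$\delta$ uniformly in $d$'' argument for the composition $\tilde g_d^{-1}$ precise — in particular arguing that convergence in probability is preserved under a uniformly equicontinuous family of maps — and, secondarily, the bookkeeping around $E[R_i]$ noted above.
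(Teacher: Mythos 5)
Your proposal is correct and follows essentially the same route as the paper's proof: Kolmogorov's strong law applied to the centered averages (the paper handles the $E[R_i]$ bookkeeping by explicitly subtracting $\bar\mu_d := \frac1d\sum_i E[R_i]$ inside $\tilde g_d^{-1}$, exactly the reduction you flag), the uniform-equicontinuity step to invert the $d$-dependent $\tilde g_d$, the continuous-mapping step for $f$, the projection inequality $E[(f(N)-E[f(N)\mid\mathbf{X}_d])^2]\le E[(f(N)-\psi_d(\mathbf{X}_d))^2]$ with boundedness of $\psi_d$ giving the $L^2$ convergence, and the identity $Q-E[Q]-\hat Q_d = E[f(N)\mid\mathbf{X}_d]-f(N)$. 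The delicate points you single out are precisely the ones the paper's proof relies on, so nothing further is needed.
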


\begin{proof}
By Kolmogorov's strong law, we have for 
\iflong
$$\bar{\mu}_d := \frac{1}{d} \sum_{i=1}^d  E[R_i]$$ 
\fi
\ificml
$\bar{\mu}_d := \frac{1}{d} \sum_{i=1}^d  E[R_i]$ 
\fi
that 
\begin{eqnarray*}
& \frac{1}{d} \sum_{i=1}^d  R_i - \bar{\mu}_d \! & \! \overset{P}{\rightarrow} 0 \\
\Rightarrow  \! & \! \frac{1}{d} \sum_{i=1}^d (g_i(N) + R_i) - \bar{\mu}_d - \tilde g_d(N) &\overset{P}{\rightarrow} 0\\
\overset{*}{\Rightarrow } \! & \! \tilde g_d^{-1} \left(\frac{1}{d} \sum_{i=1}^d (g_i(N) + R_i) - \bar{\mu}_d\right) - \tilde g_d^{-1} \left( \tilde g_d(N) \right)\! & \! \overset{P}{\rightarrow} 0\\
\Rightarrow  \! & \! \tilde g_d^{-1} \left(\frac{1}{d} \sum_{i=1}^d X_i - \bar{\mu}_d \right) - N \! & \! \overset{P}{\rightarrow} 0\\
\overset{**}{\Rightarrow } \! & \! f\left(\tilde g_d^{-1} \left(\frac{1}{d} \sum_{i=1}^d X_i - \bar{\mu}_d \right) \right) - f(N) \! & \! \overset{P}{\rightarrow} 0\\
\Rightarrow  \! & \! \psi_d (\mathbf{X}_d) - f(N) \! & \! \overset{P}{\rightarrow} 0
\end{eqnarray*}
for some $\psi_d$ that are uniformly bounded in $d$ (the implication $*$ follows from uniform equicontinuity, implication~$**$ by the continuous mapping theorem).\ificml
\footnote{The notation $\overset{P}{\rightarrow}$ denotes convergence in probability with respect to the measure $P$ of the underlying probability space.}
\fi
~This implies
$$
E[f(N) | \mathbf{X}_d] - f(N)  \overset{L^2}{\rightarrow} 0
$$
because 
$$
E[\left(f(N) - E[f(N)|\mathbf{X}_d]\right)^2] \leq E[\left(f(N) - \psi_d(\mathbf{X}_d) \right)^2] {\rightarrow} 0
$$
(The convergence of the right hand side follows from $\psi_d (\mathbf{X}_d) - f(N)  \overset{P}{\rightarrow} 0$ and boundedness of $\psi_d (\mathbf{X}_d) - f(N)$).
But then
\begin{align*}
Q - E[Q] - \hat Q_d 
&= - f(N) -E[Q] + E[f(N) + Q | \mathbf{X}_d] \\
&= E[f(N) | \mathbf{X}_d] - f(N)  \overset{L^2}{\rightarrow} 0
\end{align*}

\ificml
\vglue-6mm
\fi
\end{proof}

\ificml
\vglue-3mm
\fi



{
}

\ificml
The next subsection mentions an optional extension of our approach. Another extension, to the case of time-dependent data, is briefly discussed elsewhere \cite{Schoelkopf15-arxiv}. 
\fi

\iflong
The next two subsections discuss optional extensions of our approach. Readers who are mainly interested in the application may prefer to move to  Section~\ref{sec:exp} directly.

\subsection{Time Series}
Above, we have worked with random variables and assumed that the regression is performed on i.i.d.\ data drawn from those random variables. However, in practice we also encounter problems where the data are drawn from random processes depending on time.

Consider a causal graph with an additional confounder $T$ representing time, see Figure~\ref{fig:DAG-t}, and assume that the signals $R$ and $Q$ have a time series structure. 
This representation becomes necessary if $R$ and $Q$ share a strong periodicity, for example. If we want to retain this periodicity, we should not simply regress $Y$ on $X$.

\begin{figure}[bt]
\begin{center}
\begin{tikzpicture}[xscale = 2, yscale=1.7, line width=0.5pt, inner sep=0.5mm, shorten >=1pt, shorten <=1pt]
  \draw (-.5,2) node(oo) [] {unobserved};
  \draw (-.5,1) node(o) [] {observed};
  \draw (1,1) node(y) [circle, draw] {$Y$};
  \draw (2.6,1) node(x) [circle, draw] {$X$};
  \draw (2.6,2) node(r) [circle, draw] {$R$};
  \draw (1.8,2) node(n) [circle, draw] {$N$};
  \draw (1.8,1) node(t) [circle, draw] {$T$};
  \draw (1,2) node(q) [circle, draw] {$Q$};
  \draw[-arcsq] (q) -- (y);
  \draw[-arcsq] (n) -- (y);
  \draw[-arcsq] (n) -- (x);
  \draw[-arcsq] (t) -- (q);
  \draw[-arcsq] (r) -- (x);
  \draw[-arcsq] (t) -- (r);
\end{tikzpicture}
\end{center}
\caption
{While Fig.~\ref{fig:DAG} refers to i.i.d.\ data, the present figure includes an effect of time $T$ on our quantity of interest, $Q$, and through the signal $R$ on our predictors $X$ which are affected by the same noise $N$. 
Simply regressing $Y$ on $X$ as in the i.i.d.\ case removes some of the signal $Q$ from $Y$. 
Allowing for an edge $T \rightarrow N$ makes the problem even more difficult.
\label{fig:DAG-t}}
\end{figure}

In many applications the signals may have a time structure but we expect $R$ and $Q$ as well as $Q$ and $N$ to be independent. We further assume that the signals $R$ and $Q$ will normally not share any strong frequencies.
In those situations the representation shown in Figure~\ref{fig:DAG-t2} may be more appropriate.

\begin{figure}[bt]
\begin{center}
\begin{tikzpicture}[xscale = 1.5, yscale=1.4, line width=0.5pt, inner sep=0.3mm, shorten >=1pt, shorten <=1pt]
\small
  \draw (1,3) node(qt1) [circle, draw] {$Q_{t-1}$};
  \draw (2,3) node(yt1) [circle, draw] {$Y_{t-1}$};
  \draw (3,3) node(nt1) [circle, draw] {$N_{t-1}$};
  \draw (4,3) node(xt1) [circle, draw] {$X_{t-1}$};
  \draw (5,3) node(rt1) [circle, draw] {$R_{t-1}$};
  \draw (1,2) node(qt2) [circle, draw] {$\;\;Q_{t}\;\;$};
  \draw (2,2) node(yt2) [circle, draw] {$\;\;Y_{t}\;\;$};
  \draw (3,2) node(nt2) [circle, draw] {$\;\;N_{t}\;\;$};
  \draw (4,2) node(xt2) [circle, draw] {$\;\;X_{t}\;\;$};
  \draw (5,2) node(rt2) [circle, draw] {$\;\;R_{t}\;\;$};
  \draw (1,1) node(qt3) [circle, draw] {$Q_{t+1}$};
  \draw (2,1) node(yt3) [circle, draw] {$Y_{t+1}$};
  \draw (3,1) node(nt3) [circle, draw] {$N_{t+1}$};
  \draw (4,1) node(xt3) [circle, draw] {$X_{t+1}$};
  \draw (5,1) node(rt3) [circle, draw] {$R_{t+1}$};
  \draw[-arcsq] (qt1) -- (qt2);
  \draw[-arcsq] (qt2) -- (qt3);
  \draw[-arcsq] (rt1) -- (rt2);
  \draw[-arcsq] (rt2) -- (rt3);
  \draw[-arcsq] (qt1) -- (yt1);
  \draw[-arcsq] (qt2) -- (yt2);
  \draw[-arcsq] (qt3) -- (yt3);
  \draw[-arcsq] (rt1) -- (xt1);
  \draw[-arcsq] (rt2) -- (xt2);
  \draw[-arcsq] (rt3) -- (xt3);
  \draw[-arcsq] (nt1) -- (yt1);
  \draw[-arcsq] (nt2) -- (yt2);
  \draw[-arcsq] (nt3) -- (yt3);
  \draw[-arcsq] (nt1) -- (xt1);
  \draw[-arcsq] (nt2) -- (xt2);
  \draw[-arcsq] (nt3) -- (xt3);
\end{tikzpicture}
\end{center}
\caption
{Special case of Figure~\ref{fig:DAG-t}. Here, the signals $Q$ and $R$ are independent and thus regressing $Y_t$ on $X_t$ is valid in the sense that it would not remove any information of $Q_t$ from $Y_t$.
\label{fig:DAG-t2}}
\end{figure}
Because of the independence between $Q$ and $R$, we can proceed as before and estimate $Q_t$ as the residuals after regressing $Y_t$ from $X_t$ (we could even allow $N$ to have a time structure, too). The graph structure shows that after including $X_t$ as a predictor for $Y_t$, all other $X_{t+h}, h \neq 0$ may contain further information about $Y_t$. Note however, that this dependence decreases quickly with increasing $|h|$, especially when the contribution of $R_t$ to $X_t$ is small compared to the contribution of $N_t$ to $X_t$. 
Still, in some simulation settings, including different time lags $X_{t+h}, h \in \{\ldots,-1,0,1,\ldots\}$ into the model for $Y_t$ improves the performance of the method (in terms of reconstructing $Q$) compared to predicting $Y_t$ only from $X_t$ (results not shown).
We expect that identifiability statements similar to the i.i.d.\ case may hold (see sections~\ref{sec:complete} and~\ref{sec:incompl}).

\fi

\subsection{Prediction from Non-Effects of the Noise Variable}
While Fig.~\ref{fig:DAG} shows the causal structure motivating our work, our method does not require a directed arrow from $N$ to $X$ --- it only requires that $N\not\independent X$, to ensure that $X$ contains information about $N$. We can represent this by an undirected connection between the two (Fig.~\ref{fig:DAG-undirected}), and note that such a dependence may arise from an arrow directed in either direction, and/or another confounder that influences both $N$ and $X$. This confounder need not act deterministically on $N$, hence effectively removing our earlier requirement of a deterministic effect, cf.\ \eqref{eq:additive}.

\begin{figure}[tb]
\begin{center}
\begin{tikzpicture}[xscale = 2, yscale=1.3, line width=0.5pt, inner sep=0.5mm, shorten >=1pt, shorten <=1pt]
  \draw (-.5,2) node(oo) [] {unobserved};
  \draw (-.5,1) node(o) [] {observed};
  \draw (1,1) node(y) [circle, draw] {$Y$};
  \draw (2.6,1) node(x) [circle, draw] {$X$};
  \draw (1.8,2) node(n) [circle, draw] {$N$};
  \draw (1,2) node(q) [circle, draw] {$Q$};
  \draw[-arcsq] (q) -- (y);
  \draw[-arcsq] (n) -- (y);
  \draw[-] (n) -- (x);
\end{tikzpicture}
\end{center}
\caption{Causal structure from Fig.~\ref{fig:DAG} when relaxing the assumption that $X$ is an effect of $N$.\label{fig:DAG-undirected}}
\end{figure}


\section{Applications\label{sec:exp}}
\subsection{Synthetic Data}

We analyze two simulated data sets that illustrate the identifiability statements from Sections~\ref{sec:complete} and~\ref{sec:incompl}.

\paragraph{Increasing relative strength of $N$ in a single $X$.}
We consider $20$ instances (each time we sample $200$ i.i.d.\ data points) of the model $Y = f(N) + Q$ and $X = g(N) + R$, where $f$ and $g$ are randomly chosen sigmoid functions and the variables $N$, $Q$ and $R$ are normally distributed. The standard deviation for $R$ is chosen uniformly between $0.05$ and~$1$, the standard deviation for $N$ is between $0.5$ and~$1$. Because $Q$ can be recovered only up to a shift in the mean, we set its sample mean to zero. The distribution for $R$, however, has a mean that is chosen uniformly between $-1$ and~$1$ and its standard deviation is chosen from the vector $(1, 0.5, 0.25, 0.125, 0.0625,0.03125,0)$.
Proposition~\ref{prop:decrsd} shows that with decreasing standard deviation of $R$ we can recover the signal $Q$. 
Standard deviation zero corresponds to the case of complete information (Section~\ref{sec:complete}). 
For regressing $Y$ on $X$, we use the function \texttt{gam} (penalized regression splines) from the \texttt{R}-package \texttt{mgcv}; 
Figure~\ref{fig:simJonas} shows that this asymptotic behavior can be seen on finite data sets. 

\paragraph{Increasing number of observed $X_i$ variables.}
Here, we consider the same simulation setting as before, this time simulating
$X_i = g_i(N) + R_i$ for $i = 1, \ldots, p$. 
We have shown in Proposition~\ref{prop:incrp} that if the number of variables $X_i$ tends to infinity, we are able to reconstruct the signal $Q$.
In this experiment, the standard deviation for $R_i$ and $Q$ is chosen uniformly between $0.05$ and $1$; The distribution of $N$ is the same as above.
It is interesting to note that even additive models (in the predictor variables) work as a regression method (we use the function \texttt{gam} from the \texttt{R}-package \texttt{mgcv} on all variables $X_1, \ldots, X_p$ and its sum $X_1 + \ldots + X_p$).
Figure~\ref{fig:simJonas} shows that with increasing $p$ the reconstruction of $Q$ improves.
\begin{figure}
\begin{center}
\includegraphics[width=0.49 \columnwidth]{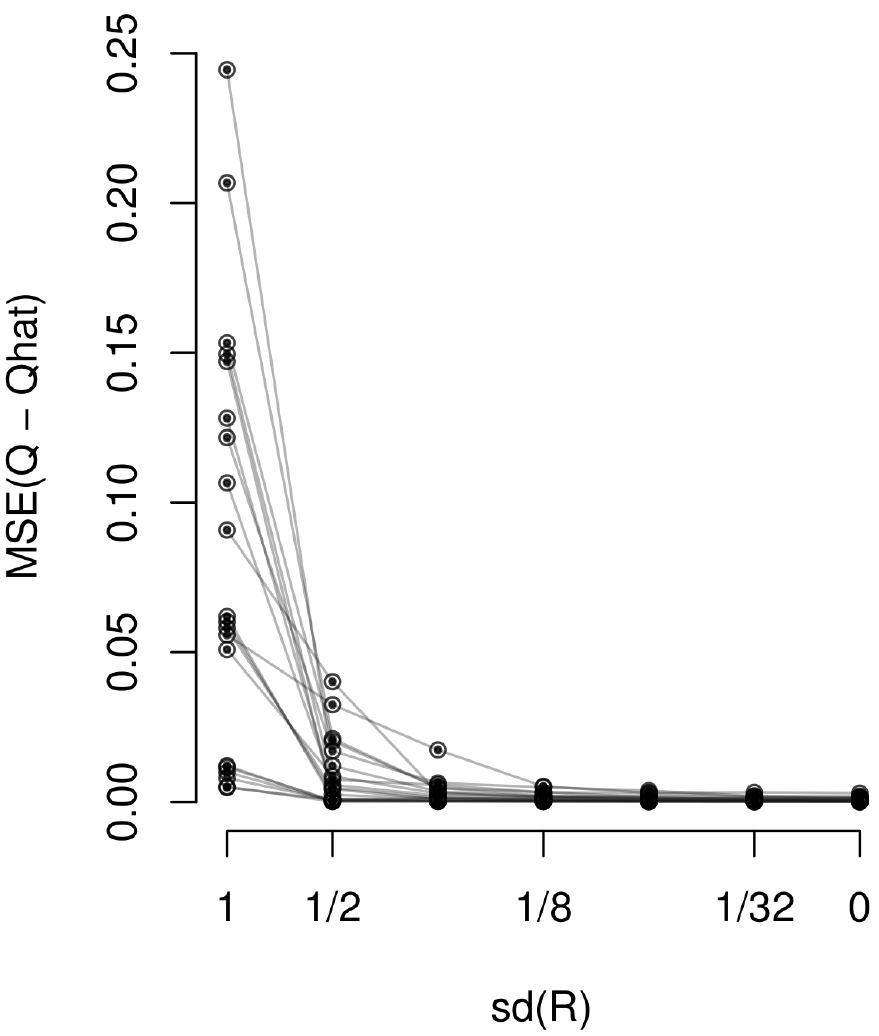}
\hfill
\includegraphics[width=0.49 \columnwidth]{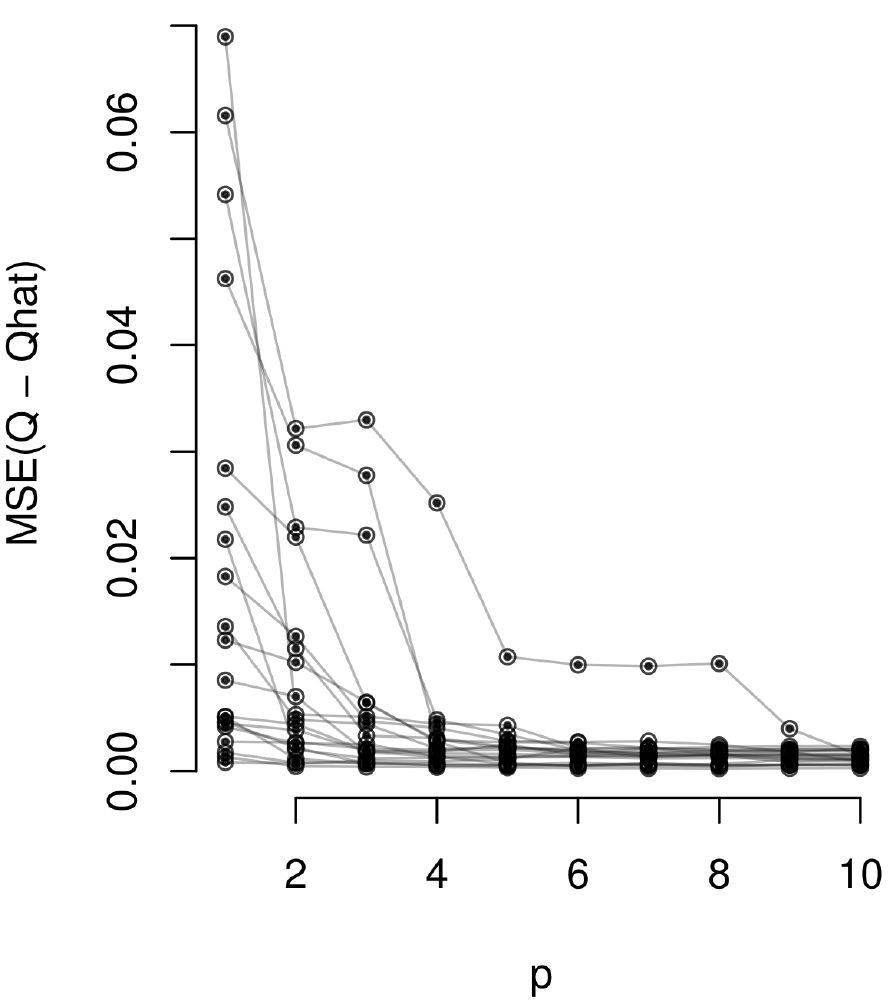}
\end{center}
\caption{
Left: we observe a variable $X = g(N) + R$ with invertible function $g$. If the variance of $R$ decreases, the reconstruction of $Q$ improves because it becomes easier to remove the influence $f(N)$ of the noise $N$ from the variable $Y = f(N) + Q$ by using $X$, see Proposition~\ref{prop:decrsd}.
Right:
a similar behavior occurs with increasing the number $p$ of predictor variables $X_i = g_i(N) + R_i$, see Proposition~\ref{prop:incrp}. Both plots show $20$ scenarios, each connected by a thin line.}
\label{fig:simJonas}
\end{figure}

\iflong
\begin{figure}[tbh]
\begin{center}
 {\includegraphics[width=1\columnwidth]{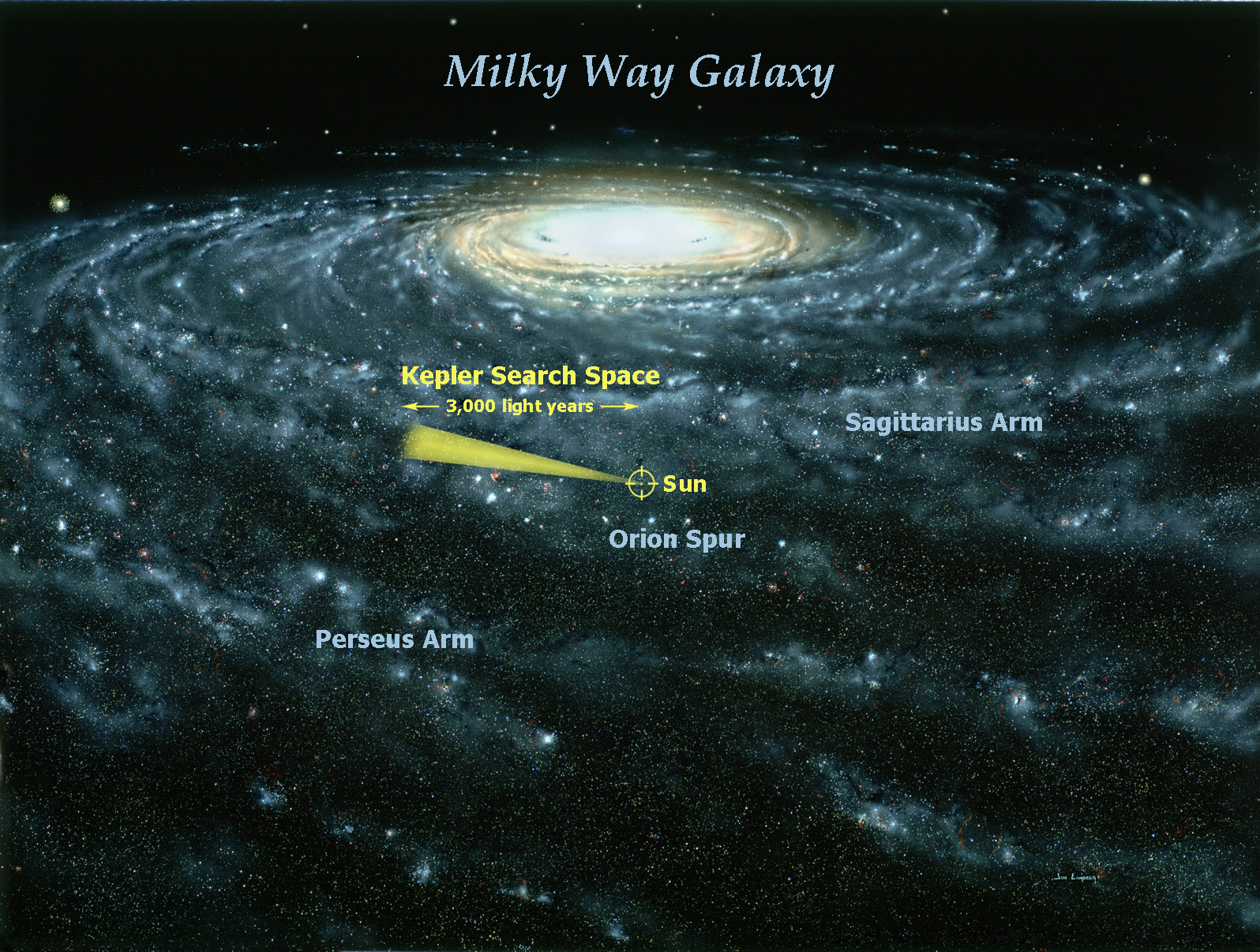}}
\end{center}
\caption{\label{fig1a}View of the Milky Way with position of the sun and depiction of the Kepler search field (image credit: NASA).}
\end{figure}
\fi

\subsection{Exoplanet Light Curves}
The field of exoplanet search has recently become one of the most popular areas of astronomy research. This is largely due to the Kepler space observatory launched in 2009. Kepler observed a tiny fraction of the Milky Way in search of exoplanets. The telescope was pointed at same patch of sky for more than four years 
\iflong
(Fig.~\ref{fig1a} and \ref{fig1b}). 
\fi
\ificml
(Fig.~\ref{fig1b}). 
\fi
In that patch, it monitored the brightness of 150000 stars (selected from among 3.5 million stars in the search field), taking a stream of half-hour exposures using a set of CCD (Charge-Coupled Device) imaging chips arranged in its focal plane using the layout visible in Fig.~\ref{fig1b}.

\begin{figure}[tbh]
\begin{center}
 {\includegraphics[width=\columnwidth]{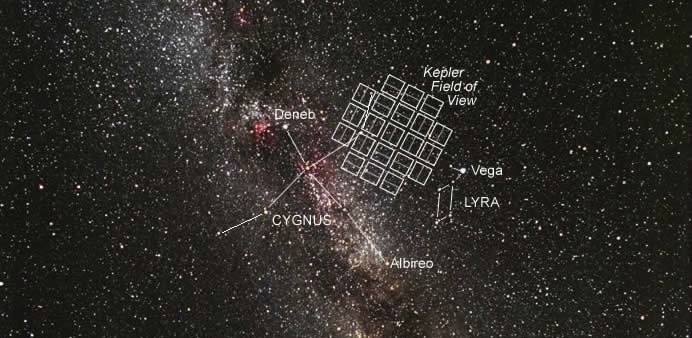}}
\end{center}
\caption{\label{fig1b}Kepler search field as seen from Earth, located close to the Milky Way plane, in a star-rich area near the constellation Cygnus (image credit: NASA).}
\end{figure}

\begin{figure}[bth]
\begin{center}
 {\includegraphics[width=1\columnwidth]{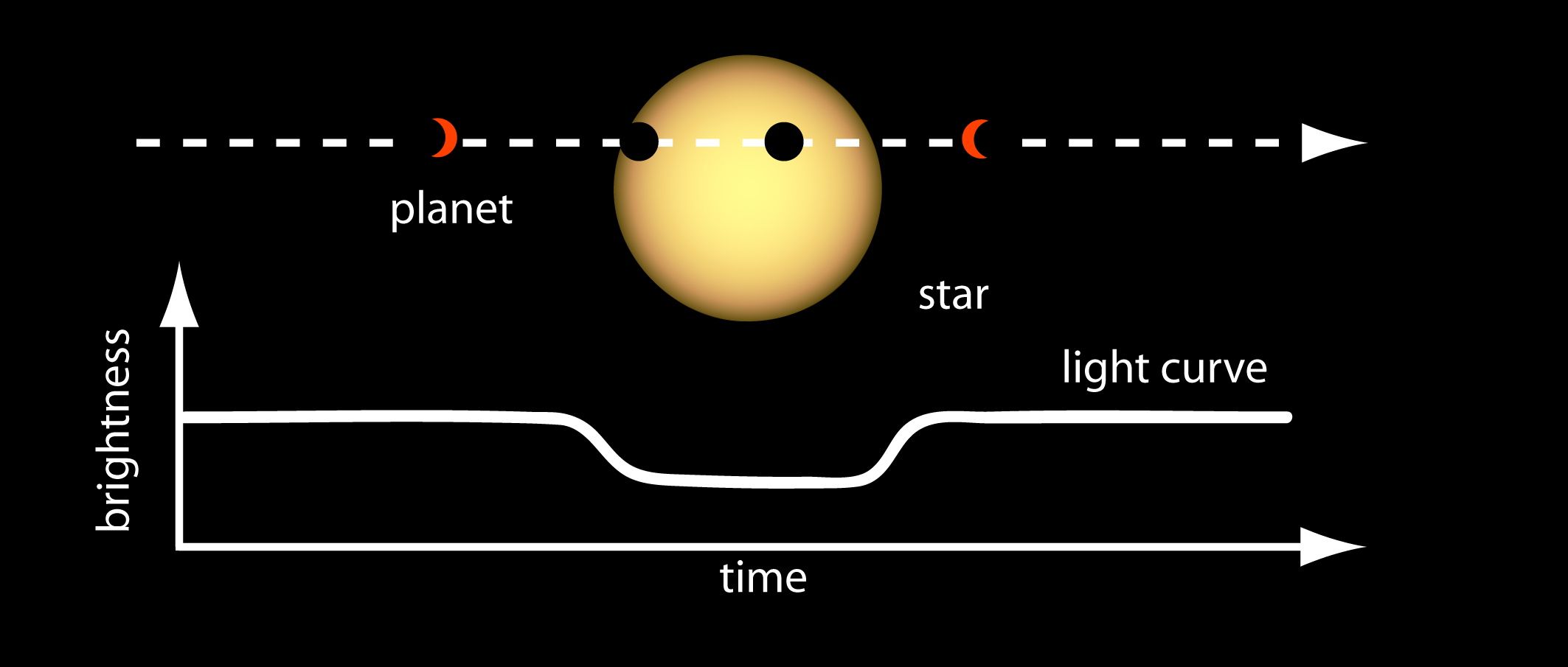}}
\end{center}
\caption{\label{transit}Sketch of the transit method for exoplanet detection. As a planet passes in front of its host star, we can observe a small dip in the apparent star brightness (image credit: NASA Ames).}
\end{figure}

Kepler detects exoplanets using the {\em transit method.} Whenever a planet passes in front of their host star(s), we observe a tiny dip in the light curve (Fig.~\ref{transit}). This signal is rather faint, and for our own planet as seen from space, it would amount to a brightness change smaller than $10^{-4}$, lasting less than half a day, taking place once a year, and visible from about half a percent of all directions. The level of required photometric precision to detect such transits is one of the main motivations for performing these observations in space, where they are not disturbed by atmospheric effects, and it is possible to observe the same patch almost continuously using the same instrument. 

For planets orbiting stars in the habitable zone (allowing for liquid water) of stars similar to the sun, we would expect the signal to be observable at most every few months. We thus have very few observations of each transit. However, it has become clear that there is a number of confounders introduced by spacecraft and telescope that lead to systematic changes in the light curves which are of the same magnitude or larger than the required accuracy. The dominant error is pointing jitter: if the camera field moves by a tiny fraction of a pixel (for Kepler, the order of magnitude is 0.01 pixels), then the light distribution on the pixels will change. 
Each star affects a set of pixels
\iflong
(Fig.~\ref{ccd}),
\fi
\ificml
(Fig.~xxxy in \cite{Schoelkopf15-arxiv}),
\fi
and we integrate their measurements to get an estimate of the star's overall brightness. Unfortunately, the pixel sensitivities are not precisely identical, and even though one can try to correct for this, we are left with significant systematic errors. Overall, although Kepler is highly optimized for stable photometric measurements, its accuracy falls short of what is required for reliably detecting earth-like planets in habitable zones of sun-like stars.

\iflong
\begin{figure*}[p!]
\centering
\begin{subfigure}[htb]{1.6\columnwidth}
\includegraphics[width=\columnwidth]{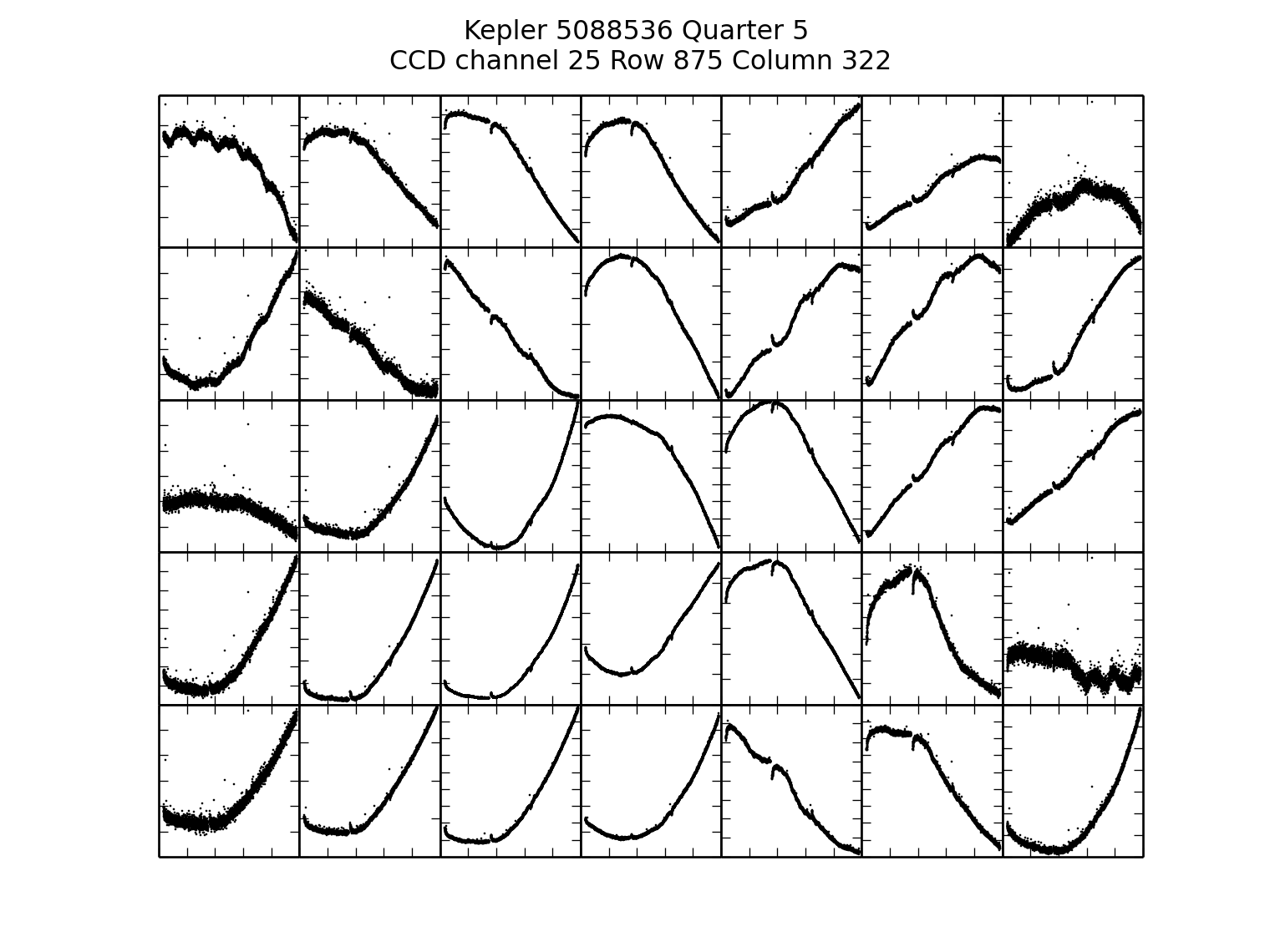}
\caption{}
\end{subfigure}

\begin{subfigure}[htb]{1.6\columnwidth}
\includegraphics[width=\columnwidth]{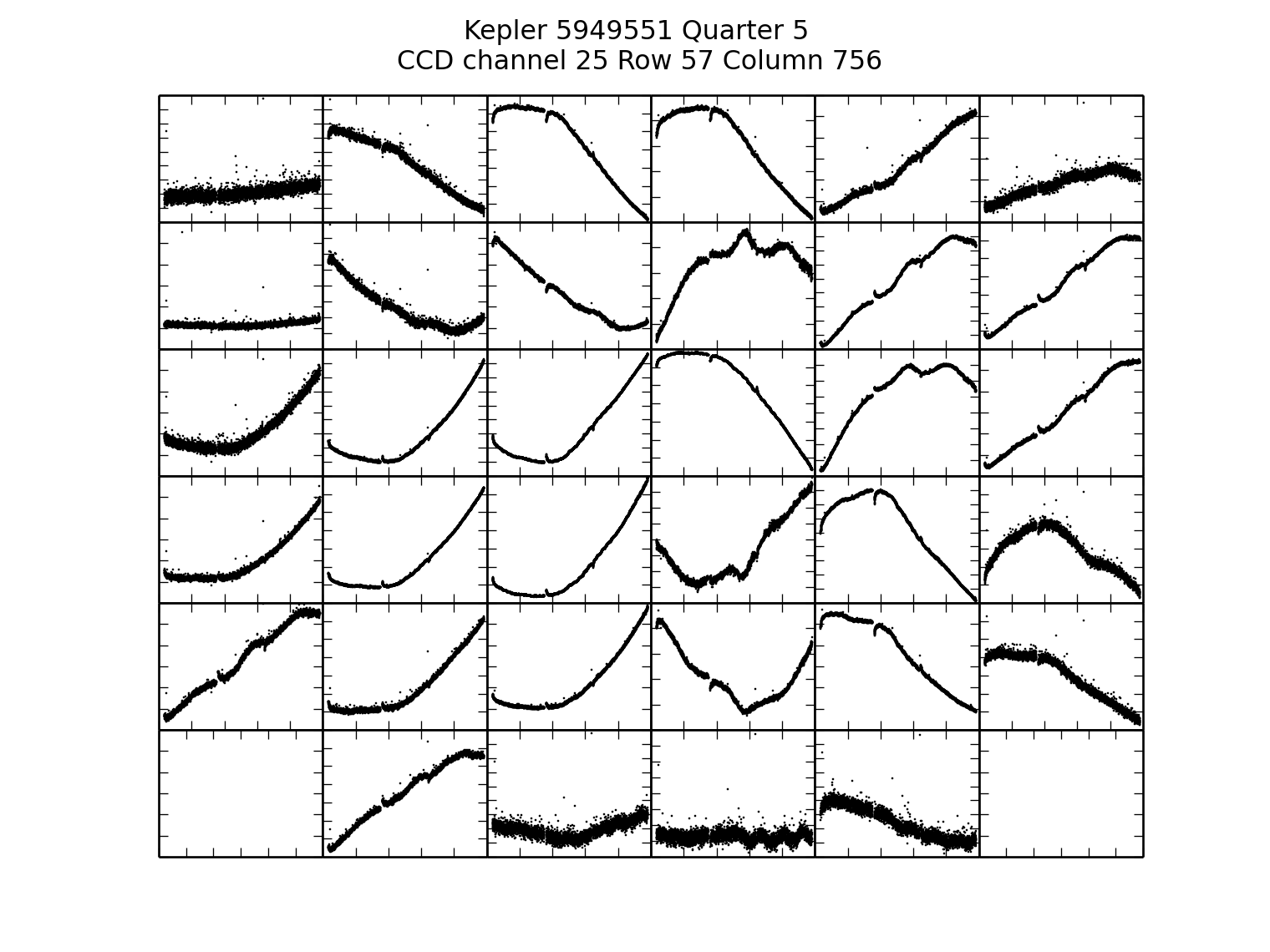}
\caption{}
\end{subfigure}
\caption{\label{ccd} Stars on the same CCD share systematic errors. The two panels show pixel fluxes (brightnesses) for two stars: (a) KIC 5088536, (b) KIC 5949551; here, KIC stands for Kepler Input Catalog. Both stars lie on the same CCD, but far enough apart such that there is no stray light from one affecting the other. Each panel shows the pixels contributing to the respective star. Note that there exist similar trends in some pixels of these two stars, caused by systematic errors. 
}
\end{figure*}
\fi

We obtained the data from the Mikulski Archive for Space Telescopes (MAST) (see \url{http://archive.stsci.edu/index.html}).
Our system, which we abbreviate as CPM (Causal Pixel Model), 
is based on the assumption that stars on the same CCD share systematic errors. If we pick two stars on the same CCD that are far away from each other, they will be light years apart in space and no physical interaction between them can take place. 
\iflong
As Fig.~\ref{ccd} shows, 
\fi
\ificml
As Fig.~xxxy in \cite{Schoelkopf15-arxiv} shows, 
\fi
the light curves nevertheless have similar trends, which is caused by systematics. In CPM, we use linear regression to predict the light curve of each pixel belonging to the target star as a linear combination of a set of predictor pixels. Specifically, we use 4000 predictor pixels from about 150 stars, which are selected to be closest in magnitude to the target star.\footnote{The exact number of stars varies with brightness, as brighter stars have larger images on the CCD and thus more pixels.} This is done since the systematic effects of the instruments depend somewhat on the star brightness; e.g., when a star saturates a pixel, blooming takes place and the signal leaks to neighboring pixels. To rule out any direct optical cross-talk by stray light, we require that the predictor pixels are from stars sufficiently far away from the target star (at least 20 pixels distance on the CCD), but we always take them from the same CCD (note that Kepler has a number of CCDs, and we expect that systematic errors depend on the CCD).  We train the model separately for each month, which contains about 1300 data points.\footnote{The data come in batches which are separated by larger errors, since the spacecraft needs to periodically re-direct its antenna to send the data back to earth.}
 Standard  L2 regularization is employed to avoid overfitting, and parameters (regularization strength and number of input pixels) were optimized using cross-validation. Nonlinear kernel regression was also evaluated, but did not lead to better results. This may be due to the fact that the set of predictor pixels is relatively large (compared to the training set size); and among this large set, it seems that there are sufficiently many pixels who are affected by the systematics in a rather similar way as the target.

We have observed in our results that the method removes some of the intrinsic variability of the target star. This is due to the fact that the signals are not i.i.d.\ and time acts as a confounder. If among the predictor stars, there exists one whose intrinsic variability is very similar to the target star, then the regression can attenuate variability in the latter. This is unlikely to work exactly, but given the limited observation window, an approximate match (e.g., stars varying at slightly different frequencies) will already lead to some amount of attenuation. Since exoplanet transits are very rare, it is extremely unlikely (but not impossible) that the same mechanism will remove some transits.

Note that for the purpose of exoplanet search, the stellar variability can be considered a confounder as well, 
independent of the planet positions which are causal for transits. In order to remove this, we use as additional regression inputs also past and future of the target star. This adds an autoregressive (AR) component to our model, removing more of the stellar variability and thus increasing the sensitivity for transits. In this case, we select an exclusion window around the point of time being corrected, to ensure that we do not remove the transit itself. Below, we report results where the AR component uses as inputs the three closest future and the three closest past time points, subject to the constraint that a window of $\pm$9 hours around the considered time point is excluded. Choosing this window corresponds to the assumption that time points earlier than -9 hours or later than +9 hours are not informative for the transit itself. Smaller windows allow more accurate prediction, at the risk of damaging slow transit signals.
Our code is available at \url{https://github.com/jvc2688/KeplerPixelModel}.

To give a view on how our method performs, CPM is applied on several stars with known transit signals. After that, we compare them with the Kepler Pre-search Data Conditioning (PDC) method (see \url{http://keplergo.arc.nasa.gov/PipelinePDC.shtml}). 
\iflong
PDC builds on the idea that systematic errors have a temporal structure that can be extracted from ancillary quantities. 
\fi 
The first version of PDC removed systematic errors based on correlations with a set of ancillary engineering data, including temperatures at the detector electronics below the CCD array, 
  and polynomials describing centroid motions of stars. The current PDC 
\ificml
\cite{pdc2}
\fi
\iflong
\cite{pdc2,pdc3}
\fi
performs PCA on filtered light curves of stars, projects the light curve of the target star on a PCA subspace, and subsequently removes this projection. 
The PCA is performed on a set of relatively quiet stars close in position and magnitude. 
For non-i.i.d.\ data, this procedure could remove temporal structure of interest. To prevent this, the PCA subspace is restricted to eight dimensions, strongly limiting the capacity of the model \citep[cf.][]{Foreman-Mackey15}.

\begin{figure*}
\centering
\begin{subfigure}[htb]{0.69\columnwidth}
\includegraphics[width=\columnwidth]{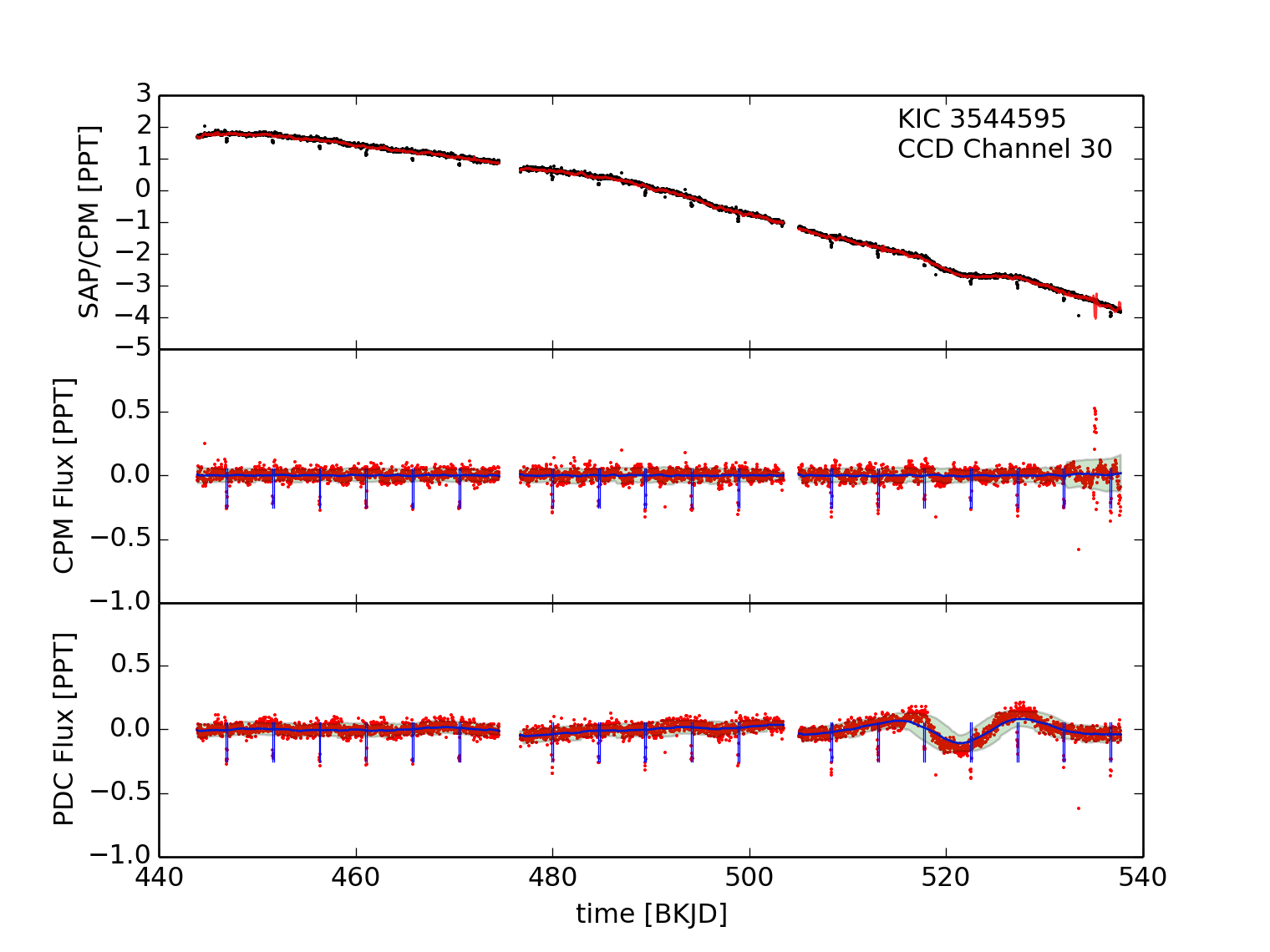}
\caption{}
\end{subfigure}%
\hfill
\begin{subfigure}[htb]{0.69\columnwidth}
\includegraphics[width=\columnwidth]{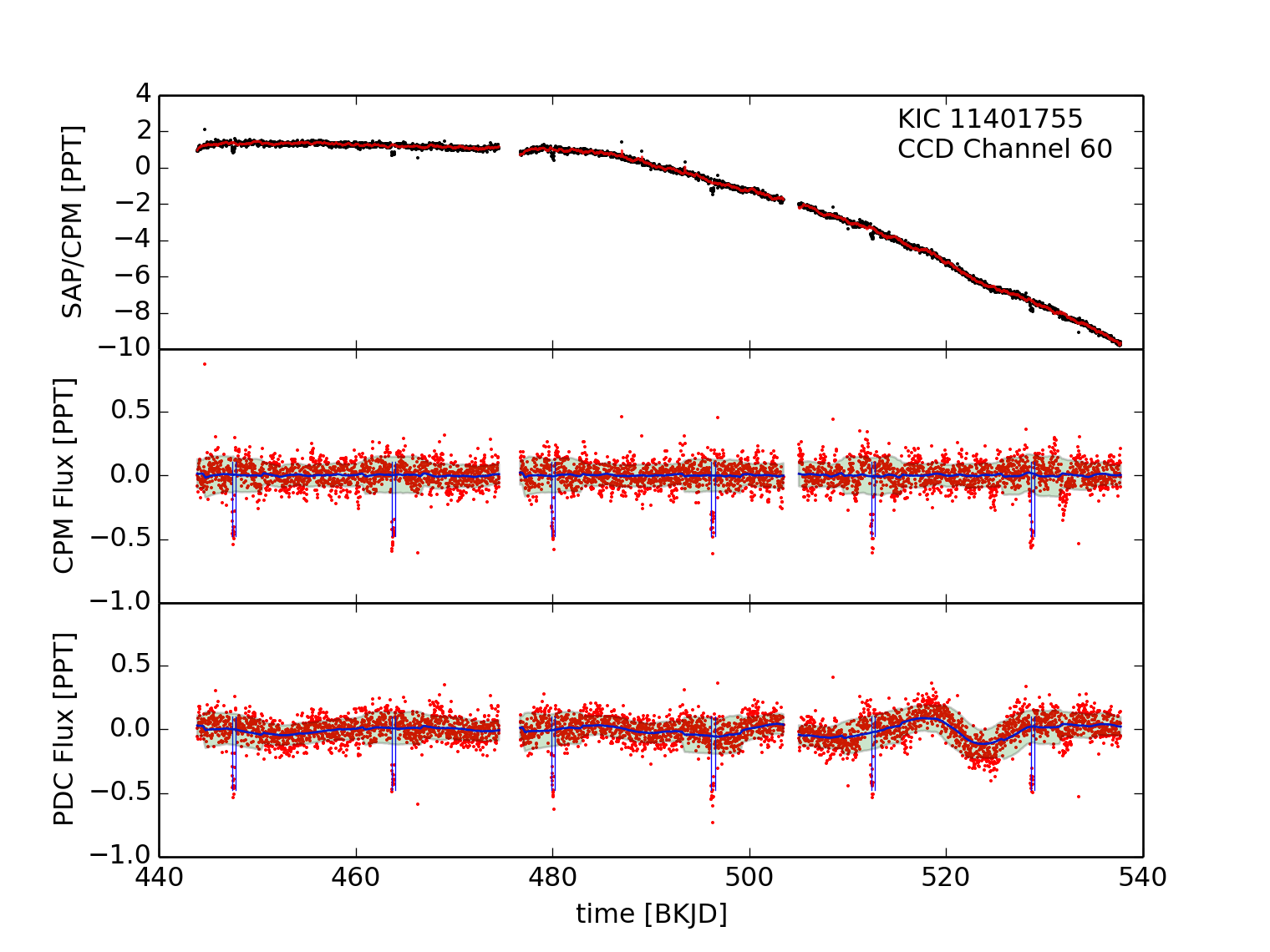}
\caption{}
\end{subfigure}%
\hfill
\begin{subfigure}[htb]{0.69\columnwidth}
\includegraphics[width=\columnwidth]{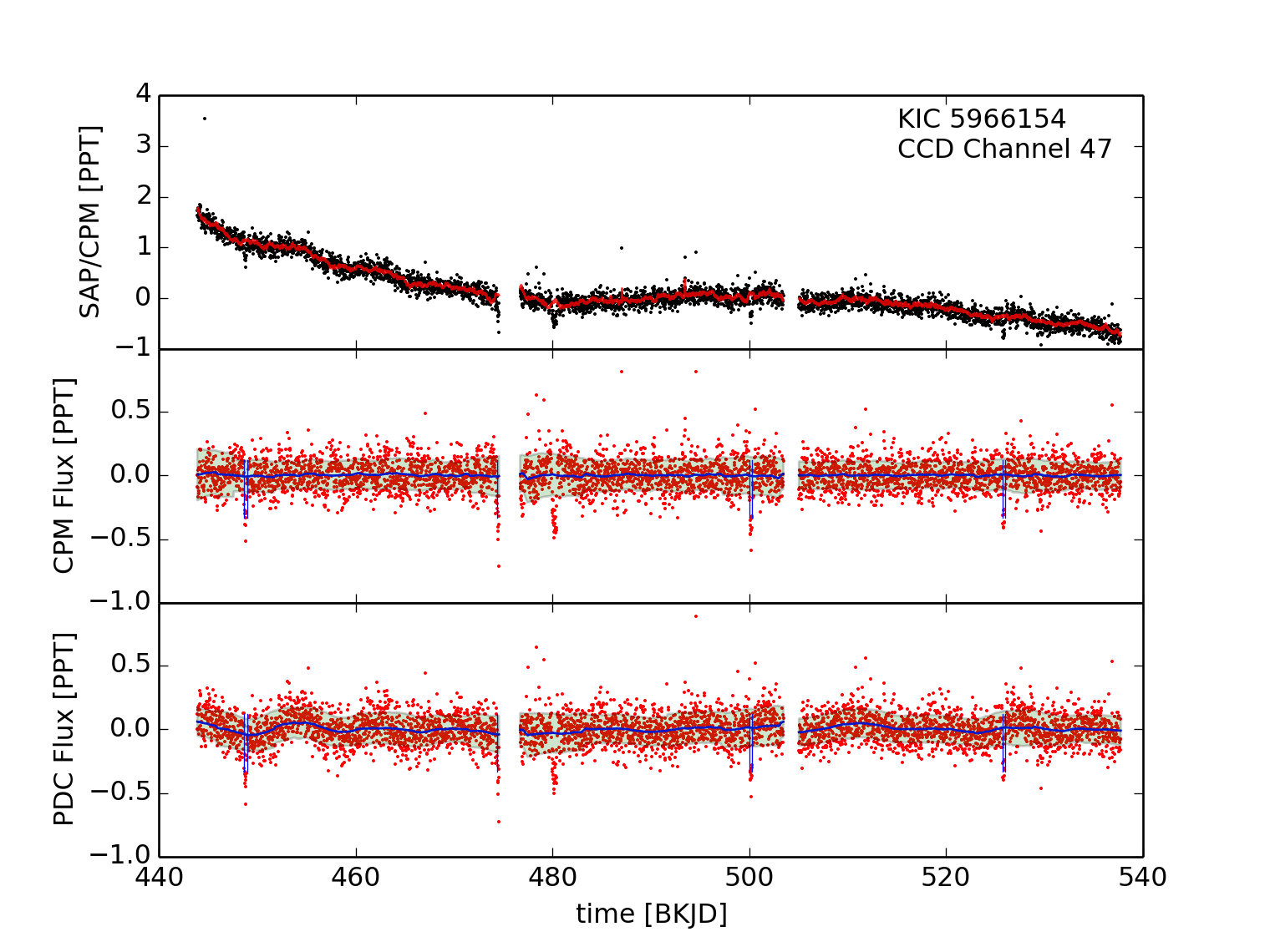}
\caption{}
\end{subfigure}
\caption{\label{fluxes}Corrected fluxes using our method, for three example stars, spanning the main magnitude (brightness) range encountered. In (a), we consider a bright star, in (b), a star of moderate brightness,  and in (c), a relatively faint star.
SAP stands for Simple Aperture Photometry (in our case, a relative flux measure computed from summing over the pixels belonging to a star).
In all three panels, the top plot shows the SAP flux (black) and the CPM regression (red), i.e., our prediction of the star from other stars. The middle panel shows the CPM flux corrected using the regression (details see text), and the bottom shows the PDC flux (i.e., the default method). The CPM flux curve preserves the exoplanet transits (little downward spikes), while removing a substantial part of the variability present in the PDC flux. All x-axes show time, measured in days since 1/1/2009.}
\end{figure*}

In Fig.~\ref{fluxes}, we present corrected light curves for three typical stars of different magnitudes, using both CPM and PDC. 
Note that in our theoretical analysis, we dealt with additive noise, and could deal with multiplicative noise, e.g., by log transforming. In practice, none of the two models is correct for our application. If we are interested in the transit (and not the stellar variability), then the variability is a multiplicative confounder. At the same time, other noises may better be modeled as additive (e.g., CCD noise). 
In practice, we calibrate the data by dividing by the regression estimate and then subtracting 1, i.e.,
$$  
\frac{Y}{E[Y|x]} - 1 
= \frac{Y}{E[Y|x]} - \frac{E[Y|x]}{E[Y|x]} 
= \frac{Y-E[Y|x]}{E[Y|x]}.$$
Effectively, we thus perform a subtractive normalization, followed by a divisive one.
This worked well, taking care of both types of contaminations.

The results illustrate that our approach removes a major part of the variability present in the PDC light curves, while preserving the transit signals. 
To provide a quantitative comparison, we ran CPM on 1000 stars from the whole Kepler input catalog (500 chosen randomly from the whole list, and 500 random G-type sun-like stars), and estimate the Combined Differential Photometric Precision (CDPP) for CPM and PDC. 
CDPP is an estimate of the relative precision in a time window, indicating the noise level seen by a transit signal with a given duration. The duration is typically chosen to be 3, 6, or 12 hours 
\cite{2012PASP..124.1279C}. 
Shorter durations are appropriate for planets close to their host stars, which are the ones that are easier to detect. We use the 12-hours CDPP metric, since the transit duration of an earth-like planet is roughly 10 hours. Fig.~\ref{cdpp} presents our CDPP comparison of CPM and PDC, showing that our method outperforms PDC. This is no small feat, since PDC is highly optimized for the task at hand, incorporating substantial astronomical knowledge (e.g., it attempts to remove stellar variability as well as systematic trends).

\begin{figure*}[t!]
\centering
\begin{subfigure}[htb]{0.7\columnwidth}
\includegraphics[width=\columnwidth]{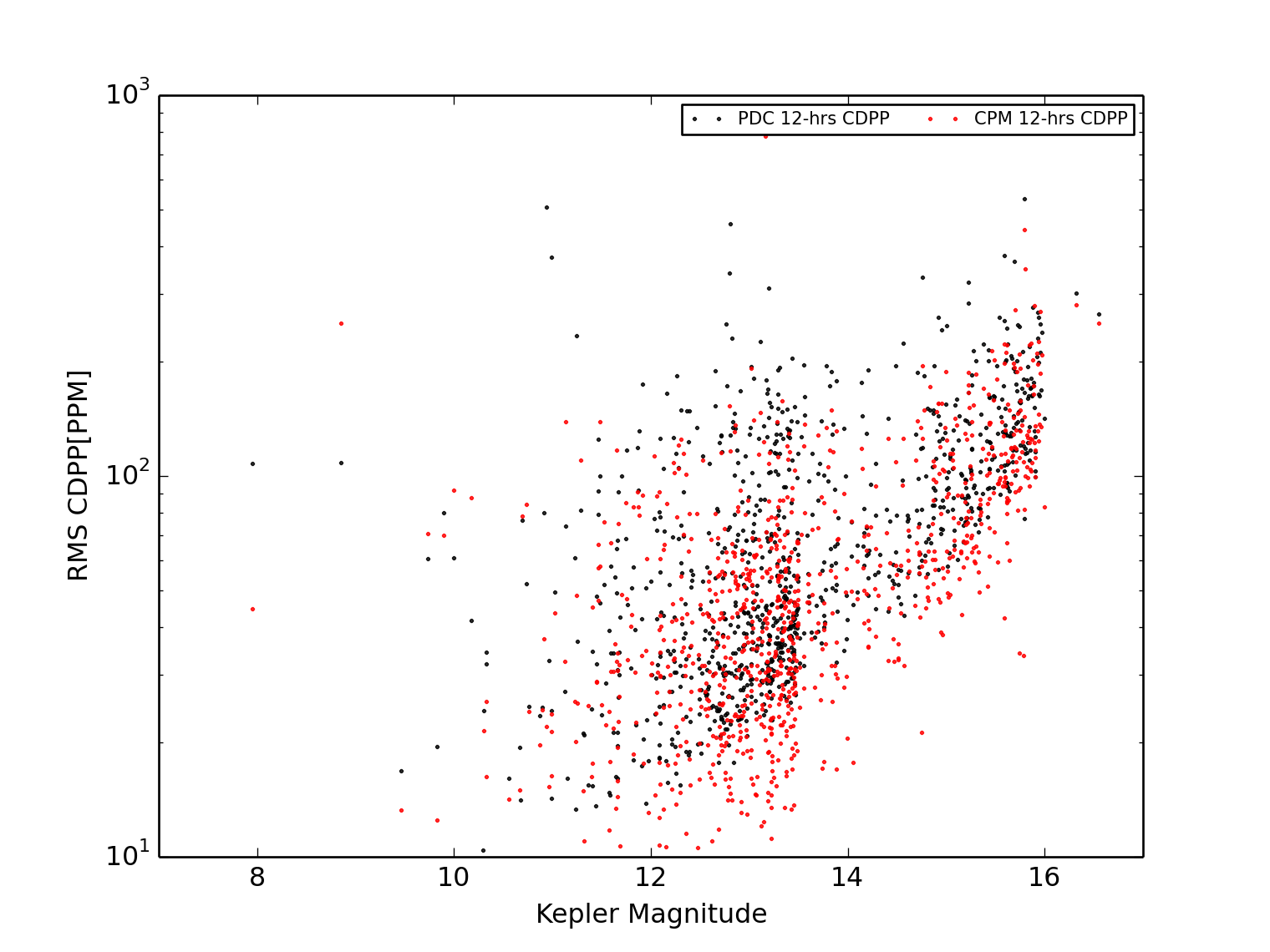}
\caption{}
\end{subfigure}%
\begin{subfigure}[htb]{0.7\columnwidth}
\includegraphics[width=\columnwidth]{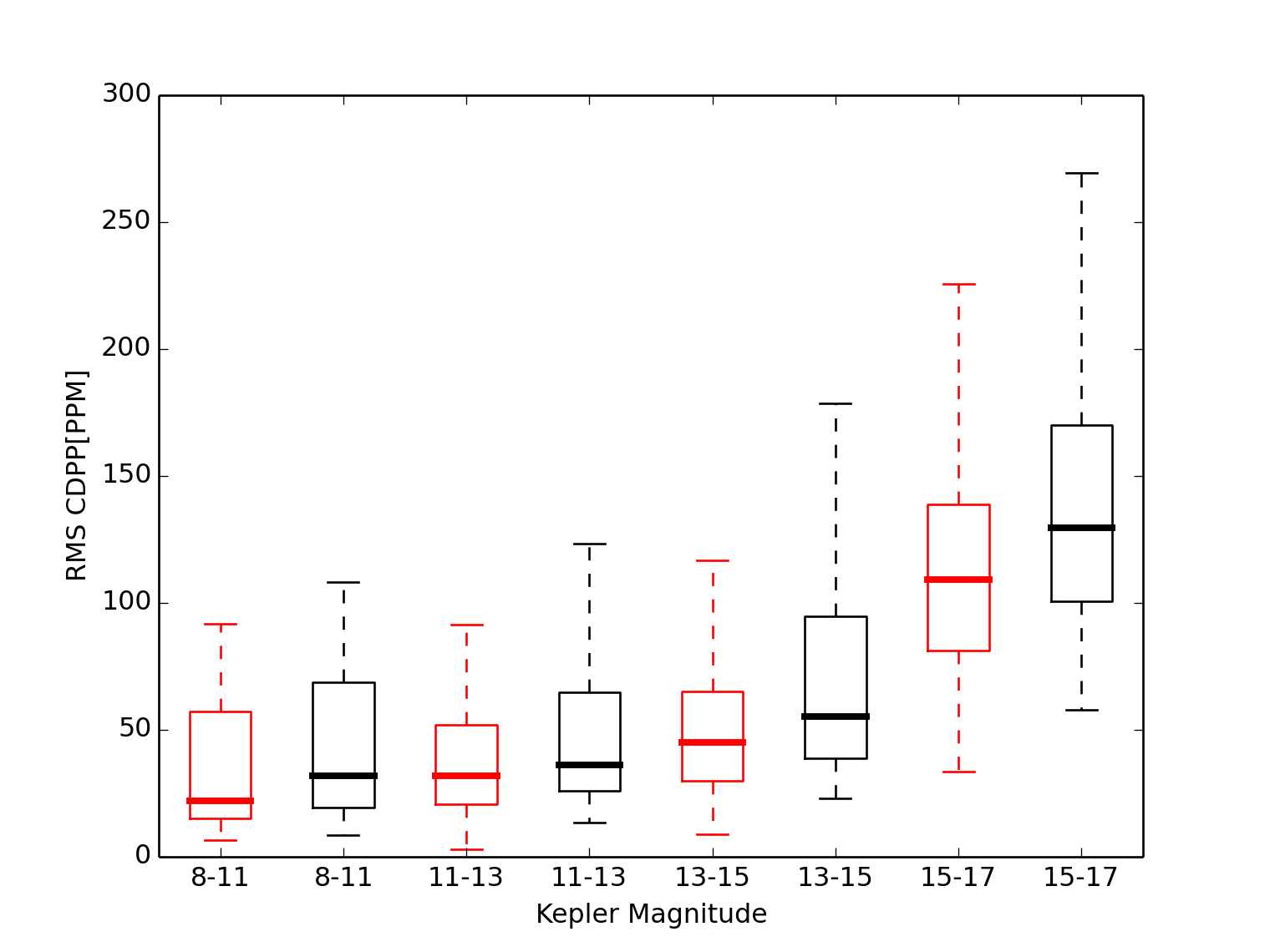}
\caption{}
\end{subfigure}%
\begin{subfigure}[htb]{0.7\columnwidth}
\includegraphics[width=\columnwidth]{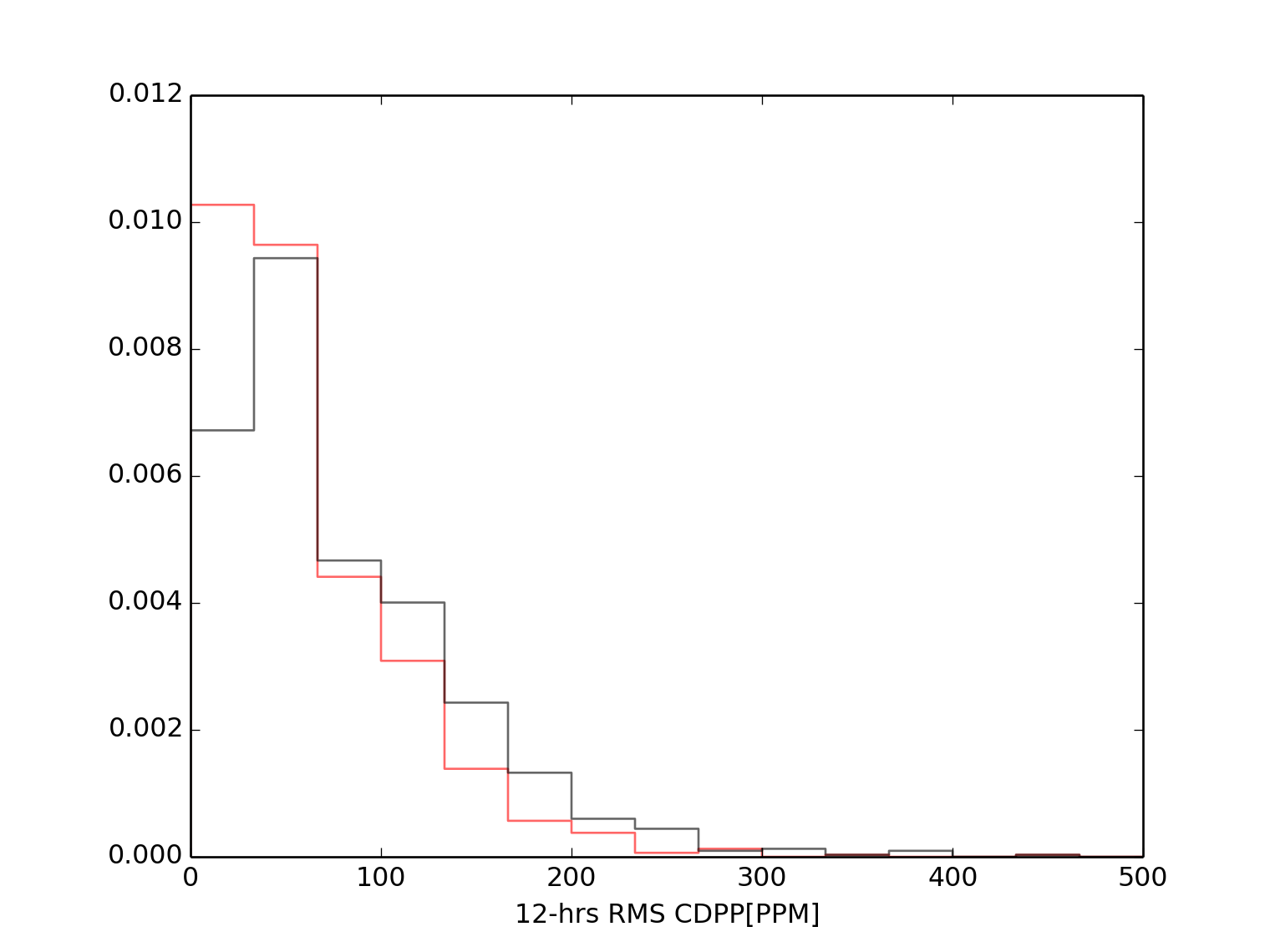}
\caption{}
\end{subfigure}
\caption{\label{cdpp}Comparison of the proposed method (CPM) to the Kepler PDC method in terms of Combined Differential Photometric Precision (CDPP) (see text). Plot (a) shows our performance (red) vs.\ the PDC performance in a scatter plot, as a function of star magnitude (note that larger magnitude means fainter stars, and smaller values of CDPP indicate a higher quality as measured by CDPP.  Plot (b) bins the same dataset and shows box plots within each bin, indicating median, top quartile and bottom quartile. The red box corresponds to CPM, while the black box refers to PDC. Plot (c), finally, shows a histogram of CDPP values. Note that the red histogram has more mass towards the left, i.e., smaller values of CDPP, indicating that our method overall outperforms PDC, the Kepler ``gold standard.'' 
}
\end{figure*}

\section{Conclusion\label{sec:conc}}
We have assayed
\iflong
{\em half-sibling regression}, 
\fi
 a simple yet effective method for removing the effect of systematic noise from observations. It utilizes the information contained in a set of other observations affected by the same noise source. 
The main motivation for the method was its application to exoplanet data processing, which we discussed in some detail, with rather promising results. However, we expect that it will have a large range of applications in other domains as well.

We expect that our method may enable astronomical discoveries at higher sensitivity on the existing Kepler satellite data. Moreover, we anticipate that methods to remove systematic errors will further increase in importance: by May 2013, two of the four reaction wheels used to control the Kepler spacecraft were disfunctional, and in May 2014, NASA announced the {\em K2} mission, using the remaining two wheels in combination with thrusters to control the spacecraft and continue the search for exoplanets in other star fields. Systematic errors in K2 data are significantly larger since the spacecraft has become harder to control. 
In addition, NASA is planning the launch of another space telescope for 2017. {\em TESS (Transiting Exoplanet Survey Satellite)}\footnote{\url{http://tess.gsfc.nasa.gov/}} will perform an all-sky survey for small (earth-like) planets of nearby stars. To date, no earth-like planets orbiting sun-like stars in the habitable zone have been found. This is likely to change in the years to come, which would be a major scientific discovery.\footnote{``Decades, or even centuries after the TESS survey is completed, the new planetary systems it discovers will continue to be studied because they are both nearby and bright. In fact, when starships transporting colonists first depart the solar system, they may well be headed toward a TESS-discovered planet as their new home.'' \cite{Haswell}} 
\ificml
Machine learning can contribute significantly towards the analysis of these datasets, and the present paper is only a start.
\fi
\iflong
In particular, while the proposed method treats the problem of removing systematic errors as a preprocessing step, we are also exploring the possibility of jointly modeling systematics and transit events. This incorporates additional knowledge about the events that are looking for in our specific application, and it has already led to promising results \cite{Foreman-Mackey15}.
\fi

\section*{Acknowledgments} We thank Stefan Harmeling, James McMurray, Oliver Stegle and
Kun Zhang for helpful discussion, and the anonymous reviewers for helpful suggestions and references. 
C-J S-G was supported
by a Google Europe Doctoral Fellowship in Causal Inference.


\vfill\eject

\bibliographystyle{icml2015}
\bibliography{bibfile2}

\end{document}